\newcommand{\E}{\mathbb{E}}
\newcommand{\R}{\mathbb{R}}
\newcommand{\Pro}{\mathbb{P}}
\newcommand{\Bay}{r^*}
\newcommand{\LBay}{\tilde{r}^*}
\newcommand{\Lf}{\tilde{F}}
\newcommand{\oL}{\overline{L}}
\newcommand{\one}{\textbf{1}}
\newcommand{\LPri}{\tilde{\mu}}
\newcommand{\Lx}{\tilde{x}}
\newcommand{\Ly}{\tilde{y}}
\newcommand{\Lz}{\tilde{z}}
\newcommand{\Lr}{\tilde{r}}
\newcommand{\0}{\textbf{0}}
\newcommand{\poly}{\textsf{poly}}
\newcommand{\logit}{\textsl{logit}}
\newcommand{\muh}{\hat{\mu}}
\newtheorem{lemma}{Lemma}
\newtheorem{proposition}{Proposition}
\newtheorem{theorem}{Theorem}
\theoremstyle{definition}
\newtheorem{definition}{Definition}
\newtheorem{example}{Example}
\newtheorem{remark}{Remark}[section]
\begin{document}

\title{Learning of Optimal Forecast Aggregation in Partial Evidence Environments}
\author{Yakov Babichenko}
\author{Dan Garber}

\affil{\small Faculty of Industrial Engineering, Technion\textemdash Israel Institute of Technology.}

\maketitle

\begin{abstract}
We consider the forecast aggregation problem in repeated settings, where the forecasts are done on a binary event. At each period multiple experts provide forecasts about an event. The goal of the aggregator is to aggregate those forecasts into a subjective accurate forecast. We assume that experts are \emph{Bayesian}; namely they share a common prior, each expert is exposed to some evidence, and each expert applies Bayes rule to deduce his forecast. The aggregator is \emph{ignorant} with respect to the information structure (i.e., distribution over evidence) according to which experts make their prediction. The aggregator observes the experts' forecasts only. At the end of each period the actual state is realized.
We focus on the question whether the aggregator can learn to aggregate \emph{optimally} the forecasts of the experts, where the optimal aggregation is the Bayesian aggregation that takes into account all the information (evidence) in the system.

We consider the class of \emph{partial evidence information structures}, where each expert is exposed to a different subset of conditionally independent signals. Our main results are positive; We show that optimal aggregation can be learned in polynomial time in a quite wide range of instances of the partial evidence environments. We provide a tight characterization of the instances where learning is possible and impossible.
\end{abstract}

\section{Introduction}

How should an aggregtor aggregate forecasts made by different experts to a single subjective forecast? This question has been studied up to some extent in the literature, see discussion in Section \ref{sec:rl}. In order to address this forecast aggregation problem \emph{theoretically}, we first should specify:

\begin{itemize}
\item[(a)] How do experts make their forecasts?
\item[(b)] What does the aggregator know about the problem?
\item[(c)] How do we measure the accuracy of a forecast? 
\end{itemize}

\noindent
\textbf{How do experts make their forecasts?}
We adopt the standard Bayesian modelling of the problem. More concretely,
we assume that experts are \emph{Bayesian} and \emph{honest}. 

There is some future binary event $\Omega=\{0,1\}$. Experts share a common prior $\mu \in \Delta(\Omega)$ about the future event. Experts are exposed to different (possibly correlated) evidence which has correlation with the future event. Henceforth, we use the standard terminology of \emph{signal} to refer to an evidence. Each expert applies Bayes role to deduce her subjective posterior belief about the event, based on the signals that she observes (i.e., experts are Bayesian). Each expert reports truthfully her subjective posterior probability (i.e., experts are honest).

\noindent
\textbf{What does the aggregator know about the problem?}
The aggregator is ignorant with respect to the structure according to which the signals are drawn, and he does not observe the signals. Aggregator observes experts' forecasts only. I.e., aggregator does not know the evidence that led each one of the experts to his subjective forecast, only the forecasts themself. We distinguish between \emph{prior-ignorant aggregator} (one that does not know the prior) and \emph{prior-aware aggregator} (one that shares the common prior with the experts).

The distinction between the \emph{ignorant aggregator} and the \emph{Bayesian experts} is realistic in many scenarios. For instance, an ordinary web user who searches for a weather forecast has no understanding in weather. The only available information for him is the forecasts of the forecasting agencies (experts). Forecasting agencies, on the other hand, base their forecasts on evidence and have a precise models of weather forecasting. Thus, it is plausible to assume that their forecast is done in a much more rational manner; I.e., in a Bayesian manner.  

The problem of \emph{one-shot aggregation} without any knowledge of the aggregator is known to be impossible (in a very strong sense) in case where the number of experts is large, \cite{ABS}. In this paper we consider a \emph{repeated} setting. Initially, indeed the aggregator has no information about the problem. At the end of each period aggregator gets to see the realized state, and thus can extract some information about the problem, and about the abilities of each expert to predict. 

\noindent
\textbf{How do we measure the accuracy of a forecast? And what is an \emph{optimal} forecast?}
A standard measurement of forecasts accuracy is done by \emph{proper scoring rules}, i.e., a function from a forecast and a realized state into scores (payments) that incentivizes Bayesian agents to report their belief. The two most common scoring rules are the \emph{square loss} and the \emph{logarithmic loss}. In this paper we focus on the later. An \emph{optimal} forecast (i.e., the one with the minimal loss) is the forecast that aggregates \emph{all} the signals in the system in a Bayesian manner. Note that in order to make an optimal forecast, the aggregator should have access to all signals and he should know the distribution according to which these signals are drawn. This information is not available to the aggregator in our setting.

In this paper we focus on the question whether the aggregator can \emph{learn} to aggregate forecasts optimally, after sufficiently many periods? If so, how? And how fast can he learn to do so? 

We emphasize that unlike standard \emph{expert advice problem} \cite{CBL} where typically the benchmark is the best expert, here we set \emph{the most challenging} benchmark; I.e., we compare the performance of the aggregator with \emph{the optimal forecast at each period}. Typically the best aggregated forecast will defer from the forecasts of all experts, and in particular the best expert performance will be inferior to the performance of the optimal-forecast at each period.

\subsection{Partial evidence environment}
It is hard to model what precisely is the information that is available to experts in realistic forecasting problems. Most of the Bayesian theoretical literature significantly simplifies the environment, by assuming that experts are exposed to \emph{independent} information conditional on the state, e.g., \cite{BHW,Jack}. This abstraction of the problem indeed make the information aggregation problem much more tractable. However, in some environments this assumption might be too demanding. For instance, two experts of weather forecast may use the same measurement apparatus in their forecast model. Experts in the political arena might evaluate candidates chances of success in the elections according to their publicly observed attributes (diversity in the evaluation might be caused by different attributes that experts take into account in their model). On the other extreme, allowing arbitrary correlation among experts evidence make the problem of forecast aggregation not tractable\footnote{Even in the simple case of binary signals for each expert, just the description of an arbitrary correlated distribution over the profiles of signals is exponential in the number of experts. Moreover, even for the simplest case of two experts with binary signals, the impossibility result of \cite{ABS} Proposition 1 for the \emph{one shot} aggregation problem can be easily extended to the same impossibility result for the \emph{repeated} forecast aggregation problem.}. Thus we suggest an intermediate model of \emph{partial evidence}, which on the one hand is not too simplified: it allows correlation between experts which might be caused due to exposure to the same evidence. On the other hand, the model is not too complicated, which will make the problem tractable (at least in some cases). Very similar model of \emph{information diversity} has been studied recently by \cite{SPU,EPSU}, see a discussion in Section \ref{sec:rl}. 

The partial evidence model consists of a finite set of conditionally independent \emph{signals} $[m]$ (so far we are aligned with the standard literature). Unlike the standard literature, we assume that each expert $i\in [n]$ is exposed to a \emph{subset} of these signals, and uses only these signals to calculate his posterior belief. We denote by $A\in \{0,1\}^{n\times m}$ the evidence matrix, where $A(i,j)=1$ indicates that expert $i$ observes signal $j$. We recall that our aggregator does not know how many signals there are, what is the distribution over the signals, and does not know $A$.

\subsection{Summary of Results}
We consider two settings, of a \emph{static} and a \emph{dynamic} environment. In both environments the evidence matrix $A$ (and also the set of experts and set of signals) remains fixed. This assumption correspond to the fact that forecasting agencies do not frequently change the \emph{model} according to which they produce their forecast. 

In the static environment the information structure remains fixed along the process. This scenario may correspond for instance to scenarios of medical prognosis where initially, before any information has been exposed, the problem is identical. We show that if the evidence matrix $A$ is \emph{not injective}, even the prior-aware aggregator cannot learn the optimal aggregation (see Proposition \ref{pro:imp}) irrespective of the number of samples. The main positive result shows that whenever $A$ is \emph{injective}, even the prior-ignorant aggregator can learn the optimal aggregation; Moreover, he can do so after a polynomial (in $n$ and $m$) number of samples (see Theorem \ref{th:stat}).

In the dynamic environment, the information structure changes along time (in an adversarial manner). 
For instance, in the weather forecasting example, the probability distribution of rain/no rain and the distribution of outcomes of the measurement apparatuses varies during the seasons of the year.
We restrict attention to the case where $A$ is injective (because otherwise learning is impossible even in the static setting). We show that the \emph{prior-ignorant} aggregtor cannot learn the optimal aggregation (see Proposition \ref{pro:imp-dyn}) irrespective of the number of samples. The main positive result shows that the \emph{prior-aware} aggregator can learn the optimal aggregation; Moreover, he can do so after a polynomial number of samples (see Theorem \ref{th:dyn}).

\begin{table}[h]
\caption{The possible and the impossible in learning of optimal forecast aggregation with respect to the three relevant parameters of the problem: whether the evidence matrix is injective or not, whether the aggregator is prior-aware or not, and whether the environment is static or dynamic.}
\resizebox{\columnwidth}{!}{
\begin{tabular}{ccccccc}
\multicolumn{3}{c}{Injective $A$}                                                                                &  & \multicolumn{3}{c}{Non injective $A$}                                                                    \\
                                         & prior-aware                       & prior-ignorant                    &  &                                          & prior-aware                   & prior-ignorant                \\ \cline{2-3} \cline{6-7} 
\multicolumn{1}{c|}{static}  & \multicolumn{1}{c|}{$\checkmark$} & \multicolumn{1}{c|}{$\checkmark$} &  & \multicolumn{1}{c|}{static}  & \multicolumn{1}{c|}{$\times$} & \multicolumn{1}{c|}{$\times$} \\ \cline{2-3} \cline{6-7} 
\multicolumn{1}{c|}{dynamic} & \multicolumn{1}{c|}{$\checkmark$} & \multicolumn{1}{c|}{$\times$}     &  & \multicolumn{1}{c|}{dynamic} & \multicolumn{1}{c|}{$\times$} & \multicolumn{1}{c|}{$\times$} \\ \cline{2-3} \cline{6-7} 
\end{tabular}
}
\end{table}

\subsection{Techniques}
The positive results are based on the following idea. The model, as given, is non-linear. However, once we translate the problem into the log-likelihood space (i.e., we translate experts' forecasts to log-likelihoods), then the desired Bayesian aggregation (of all the information) becomes a liner problem. The (minus) logarithmic loss in the log-likelihood representation turns out to be a convex function. Thus, we can leverage tools from the paradigm of \textit{online convex optimization} \cite{Hazan16}, mainly a regret-minimization algorithm known as \textit{online gradient descent} (OGD), which we apply in order to converge to the optimal \emph{linear} aggregation. The following two obstacles arise in the application of OGD. The first is that OGD requires as feedback the gradient of the loss function on each round, which is not available in our setting. We overcome this difficulty by replacing the loss function of interest with an unbiased estimator, for which the gradient vector is known, and thus, we are able to obtain sublinear regret bounds which hold in expectation. The second obstacle, is that the input to the algorithm at each period is \emph{not necessarily bounded}; Although forecasts are bounded in $[0,1]$, their \emph{log-likelihood} is not necessarily bounded, which may potentially cause the gradient feedback to OGD to explode. We overcome this obstacle by setting an appropriate threshold $\tau$ to the forecasts. We treat differently the cases where all experts provide forecasts in the segment $[\tau,1-\tau]$, and the case where at least one expert provide an ``extreme" forecast. In the former case we simply apply the stochastic gradient decent. In the latter we provide a forecast that is similar to that of the ``extreme" expert.

The translation of the problem to the log-likelihood space inherently assumes the knowledge of the prior. Although in the static model we consider a prior-ignorant aggregator, this aggregator may provide a good \emph{estimation} for the prior simply by counting the fraction of rounds where the realization turned out to be $\omega=1$. We show that good estimation for the prior is sufficient in order to apply our techniques, which implies the result in the static model for the prior-ignorant aggregator. 

\subsection{Related Literature}\label{sec:rl}
The research question in this paper is in the interplay of \emph{machine learning} and \emph{forecast aggregation}. Below we discuss the relevant literature in this two areas.

\subsubsection{Forecast aggregation}
Many different heuristics to aggregate forecasts have been suggested in the literature; linear heuristics \cite{Stone,GM,DM}, multiplicative heuristics \cite{Bordley}, and other non-linear heuristics \cite{Armstrong,PJ,PFJ,RG,SFU}. These heuristics are either optimal on some concrete information structure, or have the convenience of simplicity. 
%However, few of these heuristics have theoretical consolidation.
The empirical approach to forecast aggregation typically starts with a class of simple (possibly parametric) heuristics, and checks which heuristic preforms optimally on the training set (i.e., optimizes the parameters). Substantial part of the forecast aggregation literature adopts the empirical approach, see \cite{BD} for an excellent survey of the empirical literature, and the more recent studies \cite{UMS,FM}.
However, it is not clear which class of heuristics we should start with at the first place, and there are no theoretical guarantees for the performance of these heuristics. Our positive results suggest an interesting class of initial (relatively simple) heuristics: linear heuristics over the log-likelihood space with a ``prior adjustment", see details in Sections \ref{sec:ll-trans} and \ref{sec:sgd}. This class of heuristics has a theoretical justification: for a quite wide class of information structures - the partial evidence class - the optimal aggregation belongs to this simple class of heuristics. 

The \emph{partial evidence} environment has been suggested previously in \cite{SPU,EPSU} to capture somewhat ``realistic" environments of information. \cite{SPU,EPSU} focus on the \emph{Bayesian aggregation} problem in such an environment where the signals are real numbers, and the binary event is whether the sum of all numbers is positive/negative. Unlike our paper, \cite{SPU,EPSU} focus on an aggregator that \emph{knows} precisely the information structure.    

The setting of an \emph{ignorant aggregator} that aggregates forecasts of \emph{Bayesian experts} has been introduced in \cite{ABS,WiseC}. Unlike our paper, \cite{ABS,WiseC} focus on the \emph{one-shot} aggregation problem. \cite{ABS} indicate that the one-shot aggregation problem for large number of experts is impossible. Our positive results in this paper indicate that in a repeated setting \emph{the optimal} aggregation is possible in quite wide class of information structures.

\subsubsection{Machine Learning}
The problem of sequential prediction with adversarial data has been studied extensively in the machine learning literature in the past couple of decades, mainly under the title \textit{online learning}, see for instance the excellent text \cite{CBL}. The best-known problem in this setting is known as \textit{prediction with expert advice} \cite{littlestone1994weighted,Vovk90}, in which the typical benchmark is the best expert in hindsight. A variety of advice-aggregation algorithms that attain this benchmark has been suggested. In particular regret minimizing algorithms. The most competitive benchmark is that of \emph{best decision in each period}, which obviously is unattainable in general adversarial settings. The standard literature in the expert advice problem assumes that expert advices are \emph{adversarial}. In this paper we impose some unknown to the aggregator \emph{Bayesian structure} on the advices of the experts, and ask weather in such environments decision maker can achieve the best-decision in each period benchmark. 

%More concretely, we consider a setting where advices are given in the form of \emph{forecasts} about a future event. These forecasts are not adversarial, rather are results of some Bayesian inference of the experts; Experts share a common prior about the future event, but are exposed to different evidence which is resulted in different forecasts. The DM knows nothing about the information that is available to each expert, his only postulate is that experts are Bayesian. DM goal is to aggregate experts' forecasts to a subjective forecast.

\section{Model}

The unknown state of nature is $\Omega=\{0,1\}$ with the common prior $\mu=\Pro(\omega=1)$.
There are $m$ signals $s_1,...,s_m$ in the system with possible values $s_i\in S_i$, these are the evidence in the system. E.g., in the weather forecasting case there are measurement apparatus on which the forecasting models are based, in the medical prognosis case there are the medical tests of the patient.
The correlation between $\omega$ and the signal $s_j\in S_j$ is given by the distribution $C_j\in \Delta(\Omega \times S_j)$, when $C_j(\omega=1)=\mu$. Every signal $s_j\in S_j$ defines a posterior belief about $\omega$, $x_j(s_j)=\Pro(\omega=1|s_j)\in [0,1]$. The distribution $C_j$ defines two conditional distributions $C_j^0$ and $C_j^1$. 
We assume that conditioned on $\omega$, $C^\omega_j$ is independent of $C^\omega_{j'}$ for $j\neq j'$. We denote by $s=(s_1,...,s_m)$ the profile of signals, and by $C$ the joint distribution over $(\omega,s)$.

It is well known (see e.g., \cite{Bordley}) that the Bayesian formula for aggregation of conditionally independent signals is given by:
\begin{align}\label{eq:bay-arg}
\Bay(s)=\Pro(\omega=1|s_1,...,s_m)=\frac{(1-\mu)^{m-1}\Pi_{j=1}^{m} x_j(s_j)}{(1-\mu)^{m-1}\Pi_{j=1}^{m} x_j(s_j)+\mu^{m-1} \Pi_{j=1}^{m} (1-x_j(s_j))}.
\end{align}
Note that $\Pro(\omega=1|s_1,...,s_m)$ is, equivalently, the optimal forecast in the given environment, that aggregates \emph{all} the available information in the system.

Expert $i\in [n]$ observes only the signals $(s_j)_{j\in A_i}$ where $A_i\subset[m]$.
We denote by $A\in \{0,1\}^{n\times m}$ the \emph{evidence matrix}, where $A(i,j)=1$ iff $j\in A_i$.
Since expert $i$ is Bayesian and honest, her posterior belief (and also forecast) is given by:
\begin{align}\label{eq:bay-fore}
F_i(s)=\Pro(\omega=1|(s_j)_{j\in A_i})=\frac{(1-\mu)^{|A_i|-1}\Pi_{j\in A_i} x_j(s_j)}{(1-\mu)^{|A_i|-1}\Pi_{j\in A_j} x_j(s_j)+\mu^{|A_i|-1} \Pi_{j\in A_i} (1-x_j(s_j))}.
\end{align}
We denote by $F=(F_i)_{i\in [n]}\in [0,1]^n$ the profile of forecasts of all experts. This is precisely the object that is observed by the aggregator, according to which he should determine his own subjective forecast $r\in [0,1]$. We measure the aggregator's performance with respect to the \emph{logarithmic loss}:
\begin{align*}
l(r,\omega)=\begin{cases}
-\ln(r) &\text{ if } \omega=1 \\
-\ln(1-r) &\text{ if } \omega=0.
\end{cases}
\end{align*}
The aggregator's objective is to minimize his \emph{expected} logarithmic loss, which is given by
$L(y)=-\Bay(s)\ln(r)-(1-\Bay(s))\ln(1-r)$.
The expected loss $L(y)$ is minimized at $r=\Bay(s)$, because logarithmic loss is a proper scoring rule. Unfortunately, the aggregator does not know $\Bay(s)$, because the only information available to him is $F=F(s)$. We set the most competitive benchmark for the aggregator: he should minimize the regret $R(y)=L(r)-L(\Bay(s))$.

We consider a repeated setting, where on each period $t$, the aggregator observes the current profile of forecasts $F_t$ and the history of the past forecasts and realizations $(F_{t'},\omega_{t'})_{t'<t}$. The aggregator provides a forecast $r_t$ (before the actual state is realized). We distinguish between two environments. In the \emph{static environment} the partial evidence information structure $(A,C)$ remains fixed. In the \emph{dynamic environment} the information structure $C$ varies over time, but $A$ remains fixed. Namely the information structure along time is a sequence $(A,C_t)_t$. 

\begin{definition}
We say that \emph{the aggregator can grantee a total regret of $R(T)$ within $T$ periods in the static (dynamic) environment}\footnote{Alternatively, one may define learning with respect to \emph{sample complexity} rather than repeated settings; I.e., how many samples are needed in order to make a good forecast at a given period. Our techniques (for the positive and the negative results) can be easily modified to address this latter definition using, by-now a standard, \textit{online-to-batch} conversion technique, see for instance \cite{Hazan16} (Chapter 9).} if there exists an algorithm for the aggregator, that guarantees $$\sum_{t\in T} \E_{s_t \sim C_t, \omega_t \sim C_t|s_t}(l(r_t,\omega_t)-l(\Bay(s_t),\omega_t))\leq R(T)$$ for all partial evidence information structures $(A,C)$ (for all sequences of partial evidence information structures $(A,C_t)_{t\in [T]}$). We say that \emph{aggregator can learn the optimal aggregation}, if the aggregator can grantee a total regret of $R(T)=o(T)$ within $T$ periods.
\end{definition}

\section{Results}
\subsection{Static Environment}\label{sec:stat}
%In this section we analyse the setting where the evidence matrix $A$, as well as the information structure $C$, is fixed. We analyse the sample complexity of the optimal aggregation problem; i.e., can the aggregator learn to aggregate optimally if he has access to sufficiently large number of samples (i.e., pairs of forecasts and realized state of nature)? If so, how many samples are sufficient for this task?
%
%\begin{definition}
%We say that \emph{aggregator can learn an $\epsilon$-optimal aggregation after $T(\epsilon)$ samples} if there exists an algorithm whose input is $((F^t,\omega^t)_{t\in [T(\epsilon)]},F)$ (previous forecasts and realizations and current profile of forecasts) and whose output is $y\in [0,1]$, such that the algorithm guarantees $\E(R(y))\leq \epsilon$ for all partial evidence information structures $(A,C)$. Note that the expectation $\E(R(y))$ is taken over the samples and over $F$.
%
%We say that \emph{aggregator can learn the optimal aggregation in a static environment}, if for every $\epsilon$ there exists $T(\epsilon)$ such that aggregator can learn an $\epsilon$-optimal aggregation after $T(\epsilon)$ samples.
%\end{definition} 

We start with an example that demonstrates that learning of optimal aggregation is not always possible (even for very large/ infinite number of samples).

\begin{example}\label{ex:imp}
Consider a scenario where the prior is $\mu=\frac{1}{2}$. We have three conditional i.i.d. signals with outcome $s_i=0,1$ where $s_i$ is the correct state with probability $\frac{3}{4}$. Note that $x_i(0)=\frac{1}{4}$ and $x_i(1)=\frac{3}{4}$. We have two experts. The first observes the signals $\{s_1,s_2\}$, and the second $\{s_2,s_3\}$. 

For the sake of argument, assume that the aggretor \emph{knows} all of the above parameters of the problem. It is not obvious that the aggregator indeed can learn all of the above parameters precisely (or approximately) after sufficiently many periods; but this is an \emph{upper bound} on the information that the aggregator can extract from a sequence of realizations.

With probability $\frac{1}{2}\cdot\frac{3}{4}\cdot\frac{1}{4}\cdot\frac{3}{4}+
\frac{1}{2}\cdot\frac{1}{4}\cdot\frac{3}{4}\cdot\frac{1}{4}=\frac{3}{32}$ the realization of signals will be $(0,1,0)$. In this case both experts will forecast $\frac{1}{2}$, whereas the optimal aggregation is $\Bay((0,1,0))=\frac{1}{4}$.

With same probability $\frac{3}{32}$, the realization of signals will be $(1,0,1)$. In this case, again, both experts will forecast $\frac{1}{2}$, whereas the optimal aggregation is $\Bay((1,0,1))=\frac{3}{4}$.

Note that the aggregator in both cases observes $F=(\frac{1}{2},\frac{1}{2})$. It is easy to check that the best aggregator can do is to forecast $r=\frac{1}{2}$ when he observes the vector $(\frac{1}{2},\frac{1}{2})$, therefore his expected regret is at least 
$$\frac{6}{32}\left[\frac{1}{4}\ln(\frac{1}{4}))+\frac{3}{4}\ln(\frac{3}{4})-\frac{1}{4}\ln(\frac{1}{2})-\frac{3}{4}\ln(\frac{1}{2})\right]\approx 0.024.$$
\end{example}

The failure of aggregation in this example is a result of the fact that the evidence matrix
\begin{align*}
A=\begin{bmatrix}
    1 & 1 & 0 \\
    0 & 1 & 1  
\end{bmatrix}
\end{align*}
is not injective. This example can be generalized to the following impossibility result about learning of optimal aggregation in the case where the evidence matrix is not injective.

\begin{proposition}\label{pro:imp}
If\footnote{Whether aggregator can learn the optimal aggregation in the esoteric case where $\ker(A)\neq \{\0_m \}$ (i.e., $A$ is not injective), but $\ker(A)\subset \{z\in \R^m: \sum_i z_i =0 \}$ remains an open question.} $\ker(A)\setminus \{z\in \R^m: \sum_i z_i =0 \} \neq \emptyset$ then the prior-aware aggregator cannot learn the optimal aggregation (even) in a static environment.
\end{proposition}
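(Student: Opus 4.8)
The plan is to lift the problem into the \emph{log-likelihood (logit) space}, where Bayesian aggregation becomes linear, and then exploit the hypothesis $\ker(A)\setminus\{z:\sum_i z_i=0\}\neq\emptyset$ to construct two signal profiles that are \emph{indistinguishable} to the aggregator yet carry different optimal forecasts. First I would record the linearizing identities. Writing $\logit(p)=\ln\frac{p}{1-p}$ and setting, for a realized profile $s$, $\ell_j := \logit(x_j(s_j))-\logit(\mu)$, the formulas (\ref{eq:bay-arg}) and (\ref{eq:bay-fore}) collapse to
\[
\logit(\Bay(s))=\logit(\mu)+\one^\top \ell, \qquad \logit(F_i(s))=\logit(\mu)+(A\ell)_i .
\]
Thus a prior-aware aggregator, who knows $\mu$, effectively observes the vector $A\ell$ and must reconstruct the scalar $\one^\top\ell$. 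This is hopeless precisely when $A$ fails to pin down $\one^\top\ell$ from $A\ell$, i.e. when some $z\in\ker(A)$ has $\one^\top z\neq 0$.

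Second, I would turn the hypothesized kernel vector $z$ into a concrete information structure, generalizing Example \ref{ex:imp}. Take $\mu=\tfrac12$ (so $\logit(\mu)=0$) and let each signal be binary and symmetric, $\Pro(s_j=\omega)=q_j$, for which $\ell_j=\pm\beta_j$ with $\beta_j=\logit(q_j)$, the sign matching the realized bit. Choosing $\beta_j=\tfrac12|z_j|$ on the coordinates with $z_j\neq 0$ (and making the remaining signals uninformative, $q_j=\tfrac12$), I can select two profiles $s^{(a)},s^{(b)}$ that differ exactly on $\mathrm{supp}(z)$ so that $\ell^{(b)}-\ell^{(a)}=z$. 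Since $z\in\ker(A)$, we get $A\ell^{(a)}=A\ell^{(b)}$, hence identical forecasts $F(s^{(a)})=F(s^{(b)})=:f^\ast$, whereas $\one^\top(\ell^{(b)}-\ell^{(a)})=\one^\top z\neq 0$ forces $\Bay(s^{(a)})\neq\Bay(s^{(b)})$. Because every $q_j\in(0,1)$, both profiles occur with strictly positive probability.

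Third, I would convert this confounding into a regret lower bound valid for \emph{every} algorithm. Since the static environment draws $(s_t,\omega_t)$ i.i.d., the history $(F_{t'},\omega_{t'})_{t'<t}$ is uninformative about $\omega_t$ beyond the current observation $F_t$; hence the loss-minimizing forecast in each period is $r_t=\Pro(\omega=1\mid F_t)$, whose expected loss is the conditional entropy $H(\omega\mid F)$, while the benchmark $\Bay(s)=\Pro(\omega=1\mid s)$ attains $H(\omega\mid s)$. The per-period regret is therefore at least $I(\omega;s\mid F)=H(\omega\mid F)-H(\omega\mid s)\ge 0$, which is strictly positive here because, conditioned on $F=f^\ast$, the posterior $\Pro(\omega=1\mid s)$ is non-constant along the fiber, taking the distinct values $\Bay(s^{(a)})$ and $\Bay(s^{(b)})$ with positive probability. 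This gives a constant $\delta>0$ lower bound on the per-period regret, so total regret is at least $\delta T=\Omega(T)$, ruling out learning even for an aggregator who knows all of $(A,C)$, and a fortiori for the merely prior-aware one.

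The main obstacle I anticipate is the construction step: one must realize an \emph{arbitrary real} kernel vector $z$ as the difference of two genuinely admissible evidence vectors $\ell^{(a)},\ell^{(b)}$ arising from an honest, conditionally-independent signal distribution, while simultaneously guaranteeing (i) identical forecast profiles $f^\ast$, (ii) distinct optimal aggregates, and (iii) strictly positive probability for both profiles so the two states are truly confounded in the data. The symmetric-binary encoding is exactly what secures all three at once, reducing the condition $\one^\top z\neq 0$ to the failure of $\Bay$ to be $F$-measurable. The remaining work — checking that the mutual-information lower bound is uniform in $t$ rather than merely positive — is then a short computation.
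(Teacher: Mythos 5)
Your proposal is correct and follows essentially the same route as the paper's proof: both pass to the log-likelihood space, use a kernel vector $z$ with $\one^\top z \neq 0$ to construct two signal realizations that generate the identical forecast profile $\0_n$ yet have different optimal aggregations, and conclude a constant per-period regret for any aggregator, even one who knows $(A,C)$. The only differences are in implementation: you realize the two confounded profiles via symmetric binary signals ($\ell = \pm z/2$), which makes the martingale (mean-equals-prior) constraint automatic and so avoids the paper's appeal to the Aumann--Maschler splitting lemma, and you quantify the per-period gap via the conditional mutual information $I(\omega;s\mid F)$, a step the paper argues informally.
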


%The proposition is stated for the case where the matrix is not injective (i.e., its kernel has positive dimension) and the kernel is not contained is a specific subspace. This property is satisfied for generic instances, i.e., for all non-injective matrices except for measure zero set.

The idea of the proof is similar to the example. We generate two instances where the optimal aggregation defers in these two instances, but the aggregator observes exactly the same profile of forecasts. Such a construction is possible because $A$ maps two different vectors to one (not injective). The formal proof is relegated to Section \ref{sec:pr}. 

Our first main positive result states that whenever $A$ is injective, the aggregator can learn to predict optimally for a large enough number of samples. Moreover, if $A$ is "injective enough" (which is measured by its minimal singular value), then a dataset of size $\poly(n)$ is sufficient for learning of optimal aggregation.

We denote by $\sigma_{\min}(A)$ the minimal singular value of $A$, i.e., $\sigma_{\min}(A)^2$ is the minimal eigenvalue of the matrix $A^{\top}A$. Note in particular, that since A is injective, $\sigma_{\min}(A) > 0$.

\begin{theorem}\label{th:stat}
In case $A$ is injective, the prior-ignorant aggregator can learn the optimal aggregation in a static environment. Moreover, for the class of information structures $(A,C)$ such that $\sigma_{\min}(A)\geq \sigma>0$, the prior-ignorant aggregator can guarantee a total regret of $R(T)=\tilde{O}(n \sigma^{-1} \sqrt{T})$ within $T$ periods.
\end{theorem}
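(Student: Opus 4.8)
The plan is to pass to log-odds coordinates, where the optimal aggregation becomes an \emph{affine} function of the observed forecasts, recognize the resulting prediction task as online logistic regression, and run online gradient descent (OGD) \cite{Hazan16}; the only genuinely delicate point is controlling the unboundedness of the log-odds, which I would handle with a threshold $\tau$.

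\emph{Log-odds linearization.} Write $\logit(p)=\ln\frac{p}{1-p}$ and set $\lambda_j:=\logit(x_j(s_j))-\logit(\mu)$, the log-likelihood ratio carried by signal $j$, with $\lambda=(\lambda_1,\dots,\lambda_m)^{\top}$. Taking $\logit$ of \eqref{eq:bay-fore} and \eqref{eq:bay-arg} and telescoping the prior terms yields the exact identities
\[
\logit(F_i)=\logit(\mu)+\sum_{j\in A_i}\lambda_j=\logit(\mu)+(A\lambda)_i,\qquad \logit(\Bay)=\logit(\mu)+\one^{\top}\lambda .
\]
Thus the prior-adjusted vector $g:=\logit(F)-\logit(\mu)\one$ equals $A\lambda$; since $A$ is injective, $A^{\top}A$ is invertible and $\lambda=(A^{\top}A)^{-1}A^{\top}g$, giving
\[
\logit(\Bay)=\logit(\mu)+(\theta^{*})^{\top}g,\qquad \theta^{*}:=A(A^{\top}A)^{-1}\one .
\]
Hence the optimal forecast is, in log-odds, a fixed linear functional of the observed forecasts, with $\|\theta^{*}\|^{2}=\one^{\top}(A^{\top}A)^{-1}\one\le m/\sigma^{2}\le n/\sigma^{2}$ (using $n\ge m$, forced by injectivity).

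\emph{Reduction to OGD.} The aggregator maintains $\theta_t\in\R^{n}$ and predicts $r_t$ through $\logit(r_t)=\logit(\mu)+\theta_t^{\top}g_t$. Writing the realized loss as $f_t(\theta):=l(r_t(\theta),\omega_t)$, one checks that $f_t$ is the logistic loss in the affine argument $\theta^{\top}g_t$, hence convex in $\theta$, with the \emph{computable} gradient $\nabla f_t(\theta_t)=(r_t-\omega_t)\,g_t$ (the gradient of the \emph{expected} loss $L_t$ is unavailable because $\Bay$ is unknown, but $f_t$ is an unbiased estimator of $L_t$ with this known gradient). OGD over a convex set $K$ of diameter $D$ with gradient norms bounded by $G$ guarantees, \emph{for the realized sequence}, $\sum_t f_t(\theta_t)-f_t(\theta^{*})\le O(DG\sqrt{T})$; taking expectations and using $\E_{\omega_t\mid s_t}[f_t(\theta^{*})]=L_t(\Bay(s_t))$ (Step~1, exactly) converts this into a bound on $\sum_t\E[l(r_t,\omega_t)-l(\Bay(s_t),\omega_t)]$. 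I take $K=\{\|\theta\|\le\sqrt{n}/\sigma\}$, so $\theta^{*}\in K$ and $D=2\sqrt{n}/\sigma$.

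\emph{Thresholding and extreme periods (the crux).} The obstacle is that $\|g_t\|$, and hence $G$, is a priori unbounded, since forecasts near $0$ or $1$ have huge log-odds. Fix $\tau\in(0,\tfrac12)$ and split periods. If every forecast lies in $[\tau,1-\tau]$, then $\|g_t\|=\tilde O(\sqrt{n})$ (the $\ln(1/\tau)$ factor being logarithmic in $T$ after the final choice of $\tau$), we run OGD, and since the Step~1 identity is exact we have $f_t(\theta^{*})=l(\Bay(s_t),\omega_t)$; the contribution is $\tilde O(DG\sqrt{T})=\tilde O(n\sigma^{-1}\sqrt{T})$. If some expert $i$ is extreme, say $F_i>1-\tau$, the aggregator predicts $r_t=F_i$ and freezes $\theta_t$. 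The per-period expected regret is then $\E[\mathrm{KL}(\mathrm{Ber}(\Bay)\,\|\,\mathrm{Ber}(F_i))]$; its benign term is $O(\tau)$, while the dangerous term $(1-\Bay)\ln\frac1{1-F_i}$ is tamed by the tower property $F_i=\E[\Bay\mid (s_j)_{j\in A_i}]$: as $\ln\frac1{1-F_i}$ is $(s_j)_{j\in A_i}$-measurable and $\E[1-\Bay\mid (s_j)_{j\in A_i}]=1-F_i$, its expectation is $(1-F_i)\ln\frac1{1-F_i}\le\tau\ln\frac1\tau$ on the extreme event. A union bound over the $n$ experts gives per-period extreme regret $O\!\big(n\tau\ln\frac1\tau\big)$, hence total $O\!\big(Tn\tau\ln\frac1\tau\big)$; choosing $\tau=\Theta(1/\sqrt{T})$ makes this $\tilde O(n\sqrt{T})$ while keeping the $\ln(1/\tau)$ factors in $G$ merely logarithmic.

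\emph{Prior-ignorant case and conclusion.} A prior-aware aggregator uses the true $\logit(\mu)$ above; a prior-ignorant one plugs in $\muh_t$, the empirical frequency of $\omega=1$ before period $t$. By concentration $|\muh_t-\mu|=\tilde O(t^{-1/2})$, so $|\logit(\muh_t)-\logit(\mu)|=\tilde O(t^{-1/2})$ (when $\mu$ is polynomially away from the extremes; otherwise the extreme case of the previous step dominates). Propagating this perturbation through the loss (Lipschitz on the thresholded range), multiplying by the at-most-$O(n/\sigma)$ sensitivity of the prediction, and summing $\sum_{t\le T}\tilde O(t^{-1/2})=\tilde O(\sqrt{T})$ contributes a further $\tilde O(n\sigma^{-1}\sqrt{T})$. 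Adding the three contributions yields $R(T)=\tilde O(n\sigma^{-1}\sqrt{T})$, and in particular $o(T)$ whenever $A$ is injective. I expect Step~3 to be the main difficulty: one must simultaneously prevent the unbounded log-odds from blowing up the OGD gradients and bound the mimicking loss on extreme periods, and the tower-property/Markov argument together with the balancing of $\tau$ is what makes both terms sublinear.
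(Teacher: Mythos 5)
Your proposal is correct, and its skeleton coincides with the paper's: the log-odds linearization with optimal weights $\theta^{*}=A(A^{\top}A)^{-1}\one_m$ and $\Vert\theta^{*}\Vert\le\sqrt{n}/\sigma$, OGD run on the realized logistic loss (whose gradient $(r_t-\omega_t)g_t$ is computable and which is an unbiased surrogate for the expected loss), the threshold $\tau=\Theta(T^{-1/2})$ splitting extreme from non-extreme rounds, and a plug-in prior estimate propagated through a Lipschitz perturbation analysis with sensitivity $O(n\sigma^{-1})$ per round --- all of this matches Sections \ref{sec:sgd}, \ref{sec:ext} and the proof of Theorem \ref{th:stat}, with the same final bound.

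Where you genuinely depart from the paper is the extreme-round argument, and your route is more elementary. The paper proves (Lemmas \ref{lem:extreme}--\ref{lem:2ext}, via an explicit optimization over all information structures in Appendix \ref{ap:ext-lem}) that conditioned on some expert forecasting in $[0,\alpha]$ the true probability of $\omega=1$ is at most $n\alpha$, then forecasts $n\tau$, $1-n\tau$, or $\tfrac12$, with a separate lemma for the two-sided case. You instead mimic the extreme expert and invoke the martingale property of Bayesian posteriors, $F_i=\E[\Bay(s)\,|\,(s_j)_{j\in A_i}]$: since $\ln\frac{1}{1-F_i}$ and the event $\{F_i>1-\tau\}$ are measurable with respect to expert $i$'s information, the dangerous term integrates to $(1-F_i)\ln\frac{1}{1-F_i}\le\tau\ln\frac1\tau$, and nonnegativity of the integrand is what lets your union bound over the $n$ experts remain valid even though the identity of the mimicked expert depends on all forecasts. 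This replaces the paper's LP argument by a tower-property computation and absorbs the two-sided case for free; what the paper's version buys in exchange is a forecast always confined to $[n\tau,1-n\tau]$, which sidesteps the edge case $F_i\in\{0,1\}$ where your $r_t=F_i$ incurs a formally infinite loss on a null event (you should clip the mimicked forecast away from $\{0,1\}$). Your prior handling is also organized differently: a running estimate $\muh_t$ with error accumulation $\sum_{t\le T}\tilde{O}(t^{-1/2})=\tilde{O}(\sqrt{T})$, versus the paper's dedicated Phase 1 of length $n\sigma^{-1}\sqrt{T}$ in which it forecasts $\tfrac12$ and afterwards freezes the estimate. Your variant works and even avoids the explicit Phase 1 loss, but you still owe the two bookkeeping steps the paper does explicitly: handling the first rounds, where $\muh_t$ can equal $0$ or $1$ and $\logit(\muh_t)$ blows up (clip the estimate), and charging the low-probability concentration-failure event against the bounded per-round loss of the thresholded predictions, which as in the paper contributes only $O(1)$. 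Both are routine given your thresholding, so these are details to fill in, not gaps in the approach.
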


\begin{remark}\label{rem:sv}
The minimal singular value effects the rate of learning. There are matrices with low (or even zero) singular value where the rate of learning might be slow (or impossible as stated in Proposition \ref{pro:imp}). However we may ask what is the singular value in a "typical" matrix. A standard approach to address this question is to consider the class of all $n\times m$ 0/1 matrices and ask how many of them has small minimal singular value (i.e., the learning is slow), which is equivalent to assuming that each node is an i.i.d. Bernoulli random variable with $p=\frac{1}{2}$. 
Singular values of random matrices has been extensively studied. In particular, \cite{TG} (see Section III.C) have proved that in case $\frac{m}{n}\leq C <1$ (namely we have substantially less signals than experts) with probability close to 1 all singular values are greater than $\Omega(\sqrt{n})$, or equivalently $\sigma^{-1}=O(\frac{1}{\sqrt{n}})$. Interestingly, in such matrices the dependence of the singular value \emph{improves} the rate of learning. The total regret in such matrices can be bounded by $R(T)=\tilde{O}(\sqrt{nT})$.

%In summary, from \cite{TG} follows that most of the $n\times m$ matrices with $\frac{m}{n}\leq C <1$ has minimal singular value of at least $\Omega(\sqrt{n})$. Therefore, for a typical matrix of these dimensions, the total regret can be bounded by $R(T)=\tilde{O}(\sqrt{nT})$.  
\end{remark}

\subsection{Dynamic Environment}\label{sec:dyn}
%The information structure may vary over time; e.g., in the weather forecasting example, the probability distribution of rain/no rain and the distribution of outcomes of the measurement apparatuses varies during the seasons of the year. Here we consider a setting where the information structure $C$ (and the prior $\mu$) dynamically changes along the time in an adversarial manner. We emphasize that, yet, we restrict attention to setting where the evidence matrix $A$ remains fixed. This assumption is natural because $A$ represents the \emph{model} according to which forecasters make their prediction. Unlike whether, this parameter changes rarely.
%
%In the repeated settings in each period $t$ an aggregation algorithm provides a forecast $y^t$ as a function of the past forecasts and realisations $(F^{t'},\omega^{t'})_{t'<t}$ and the current profile of forecasts $F^t$.
%
%\begin{definition}
%We say that \emph{aggregator can grantee a total regret of $\epsilon T$ within $T(\epsilon)$ periods} if there exists an algorithm, such that guarantees $\sum_{t\in T} \E(R(y^t))\leq \epsilon T$ for all partial evidence information structures $(A,C)$.
%
%We say that \emph{aggregator can learn the optimal aggregation in a dynamic environment}, if for every $\epsilon$ there exists $T(\epsilon)$ such that aggregator can grantee a total regret of $\epsilon T$ within $T(\epsilon)$ periods.
%\end{definition}

In Proposition \ref{pro:imp} we saw that injectiveness of $A$ is a necessary condition for learning of optimal aggregation. This observation obviously holds also for the harder dynamic environment. In Theorem \ref{th:stat} we saw that the \emph{prior-ignorant} aggregator is able to learn an optimal aggregation in the static environment. The following Proposition shows that this is no longer the case for the dynamic environment.

\begin{proposition}\label{pro:imp-dyn}
The prior-ignorant aggregtor cannot learn the optimal aggregation in a dynamic environment, even for the case where $A=\textsf{Id}_n$ (i.e., experts receive conditionally independent signals). Moreover, the aggregtor cannot guarantee a regret below $T\ln 2$ within $T$ periods\footnote{A regret of at most $T\ln 2$ can be trivially guaranteed by always forecasting $r\equiv \frac{1}{2}$, irrespective of experts' forecasts.}.

%For every $\alpha<1$ there is no learning rule of the \emph{prior-ignorant} aggregator that grantees him an expected cumulative regret of less than $t \alpha$ after $t$ periods.
\end{proposition}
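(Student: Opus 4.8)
The plan is to prove the lower bound via Yao's principle: it suffices to exhibit a randomized oblivious adversary---a distribution over sequences of partial-evidence structures $(\textsf{Id}_n,C_t)_t$---against which every deterministic aggregator incurs expected total regret at least $T\ln 2 - o(T)$. I would take the rounds to be i.i.d.: on each round the adversary first draws a hidden prior $\mu_t$ from a fixed distribution $\nu$ over $(0,1)$, and then generates the signals (hence the forecast profile $F_t$ and the realization $\omega_t$) from the corresponding structure. Because the rounds are i.i.d. and the prior is redrawn each round, the forecasts and realizations of past rounds are independent of the current round, so observing the history confers no advantage and the aggregator's best prediction is $r_t=\Pro(\omega_t=1\mid F_t)$, the posterior under the mixture. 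This is precisely where the dynamic setting departs from the static one: in the static model $\mu$ is fixed across rounds, so the empirical frequency of $\omega$ reveals it, whereas here no amount of history pins down the current $\mu_t$.

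The key computation is that, against this adversary, the expected per-round regret equals the conditional mutual information $I(\omega_t;\mu_t\mid F_t)$. Indeed, the expected loss of the optimal prediction $\Pro(\omega_t=1\mid F_t)$ is the conditional entropy $H(\omega_t\mid F_t)$ (in nats), while the expected loss of the benchmark $\Bay(s_t)=\Pro(\omega_t=1\mid F_t,\mu_t)$ is $H(\omega_t\mid F_t,\mu_t)$, and their difference is $I(\omega_t;\mu_t\mid F_t)$. Since $\omega_t$ is binary this is at most $\ln 2$, and the whole task reduces to choosing $\nu$ and the channels so that $I(\omega_t;\mu_t\mid F_t)\to\ln 2$; that is, so that (i) given the true prior the aggregation is essentially deterministic ($H(\omega\mid F,\mu)\to 0$, confident $\Bay$), while (ii) after marginalizing the prior the forecasts are essentially uninformative about the state ($H(\omega\mid F)\to\ln 2$, i.e. $\Pro(\omega=1\mid F)\approx\tfrac12$).

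To engineer (i) and (ii) at once I would pass to log-odds coordinates. With $A=\textsf{Id}_n$ each expert sees a single signal, so $\logit F_i=\logit\mu+\ell_i$, where $\ell_i$ is the prior-independent log-likelihood ratio of signal $i$, and by \eqref{eq:bay-arg} the optimal aggregation satisfies $\logit\Bay=\logit\mu+\sum_{i=1}^n\ell_i$. Thus the unknown prior enters the optimal log-odds only as an additive shift $\logit\mu$ that the aggregator must subtract off, whereas the state-bearing term $\sum_i\ell_i$ concentrates (law of large numbers, for informative channels and $n$ large) at a large positive value when $\omega=1$ and a large negative value when $\omega=0$---this gives (i). For (ii), note that the law of the observable $\logit F_i=\logit\mu+\ell_i$ is the convolution of the law of $\logit\mu$ (under $\nu$) with the channel's log-likelihood-ratio law. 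Choosing $\nu$ so that the law of $\logit\mu$ has a characteristic function with zeros (e.g.\ $\logit\mu$ uniform on an interval) makes this convolution non-injective, which lets me pick two channel laws---one used conditional on $\omega=1$, one conditional on $\omega=0$---that map to the \emph{same} mixed forecast distribution. With the construction symmetric so that $\Pro(\omega=1)=\tfrac12$, marginalizing the prior then renders the forecasts (approximately) independent of $\omega$, giving $\Pro(\omega=1\mid F)\to\tfrac12$, which is (ii).

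Combining these, on a typical round the prior-aware benchmark predicts $\Bay$ near the realized state and incurs loss $\to 0$, while every aggregator is forced toward $\tfrac12$ and incurs loss $\to\ln 2$; summing over $t$ yields total regret $\to T\ln 2$, matching the trivial upper bound and establishing tightness. The main obstacle is the tension between (i) and (ii): confident aggregation requires the per-prior signal sum to be cleanly separated by $\omega$, yet state-independence of the forecasts requires the prior-mixture to blur exactly that separation. Reconciling them is what forces a genuinely spread-out family of priors (the deconvolution device above) rather than a single hidden prior, and is also why $\ln 2$ is approached in a limit (as channel informativeness and $n$ grow) rather than attained by one finite instance. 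The crux of the formal argument is verifying that the $\omega$-conditional forecast \emph{vector} laws---not merely their one-dimensional marginals---can be matched while the benchmark stays confident.
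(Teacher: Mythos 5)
Your high-level reduction is exactly the paper's: play an i.i.d.\ sequence of one-shot hard instances with a freshly drawn hidden prior each round, so that the history is useless and the repeated problem collapses to the one-shot aggregation problem. Your identity ``per-round regret $= I(\omega_t;\mu_t\mid F_t)\le \ln 2$'' is a correct and clean way to phrase why $T\ln 2$ is the right ceiling (it is valid because, given the structure, the forecast vector is a sufficient statistic for the signals, so the benchmark's expected loss is indeed $H(\omega_t\mid F_t,\mu_t)$).

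However, there is a genuine gap: the entire difficulty of the proposition lives in constructing the one-shot hard instance, and you have not produced it. The paper fills this hole by citing the mixed strategy from the proof of Theorem 4 in \cite{ABS}, which is precisely the statement you need: a distribution over conditionally independent information structures under which the optimal aggregation has log loss at most $\epsilon$ while no prior-independent aggregation function gets below $\ln 2-\epsilon$. Your proposed substitute---the deconvolution/characteristic-function sketch---has two unresolved problems, one of which you name yourself. First, the two ``channel laws'' under $\omega=1$ and $\omega=0$ are not free parameters: if $G_0,G_1$ denote the laws of an expert's log-likelihood ratio $\ell_i$ under the two states, Bayesian consistency forces $dG_1/dG_0(x)=e^{x}$, so you cannot simply pick two laws whose convolutions with the law of $\logit(\mu)$ coincide; the matching must be achieved within such an exponentially tilted pair, and it is not clear that zeros of a characteristic function deliver this. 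Second (your own ``crux''), the matching must hold for the joint law of the $n$-dimensional forecast vector, whose coordinates are dependent through the shared hidden prior $\mu_t$, while simultaneously $\sum_i \ell_i$ must separate the two states conditional on $\mu_t$; this tension is the actual content of the theorem being invoked, not a verification step one can defer. As written, your argument establishes the correct framework and the correct constant, but proves the lower bound only modulo the existence of the hard instance---that is, modulo the result the paper cites.
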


%\begin{remark}
%The bound $\alpha<1$ is obviously tight. Aggregator can suffer a constant loss of $1$ every iteration by forecasting $\frac{1}{2}$ (and ignoring the forecasts of the experts).
%\end{remark} 

Our second main positive result states that the \emph{prior-aware} aggregator, can learn the optimal aggregation even in the dynamic environment.

\begin{theorem}\label{th:dyn}
In case $A$ is injective, the prior-aware aggregator can learn the optimal aggregation in a dynamic environment. Moreover, for the class of information structures $(A,C)$ such that $\sigma_{\min}(A)\geq \sigma$ the prior-aware aggregator can guarantee a total regret of\footnote{See Remark \ref{rem:sv} for a discussion on the dependence on the minimal singular value.} $R(T)=\tilde{O}(n \sigma^{-1} \sqrt{T})$ within $T$ periods.
\end{theorem}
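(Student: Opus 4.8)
The plan is to pass to the log-likelihood (logit) representation, where Bayesian aggregation becomes an affine function of the experts' forecasts whose linear part does not depend on the information structure, and then to run stochastic online gradient descent over the unknown linear coefficients. Writing $\ell_t=\ln\frac{1-\mu_t}{\mu_t}$ and taking $\logit$ of \eqref{eq:bay-fore} and \eqref{eq:bay-arg}, conditional independence makes log-likelihood ratios additive: with the prior-adjusted logits $\Ly_{i,t}:=\logit(F_{i,t})+\ell_t$ one obtains $\Ly_{i,t}=\sum_{j\in A_i}z_{j,t}=(Az_t)_i$, where $z_{j,t}=\logit(x_j(s_{j,t}))$, while $\logit(\Bay(s_t))=-\ell_t+\one_m^{\top}z_t$. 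Hence for any $\beta$ with $A^{\top}\beta=\one_m$ we have $\logit(\Bay(s_t))=-\ell_t+\beta^{\top}\Ly_t$ on \emph{every} round, irrespective of $C_t$. Since $A$ is injective this system is solvable; I would take the minimum-norm solution $\beta^{*}=A(A^{\top}A)^{-1}\one_m$, for which $\|\beta^{*}\|^2=\one_m^{\top}(A^{\top}A)^{-1}\one_m\le m/\sigma_{\min}(A)^2\le n/\sigma^2$. The key point is that the map $\beta\mapsto -\ell_t+\beta^{\top}\Ly_t$ uses only the observed forecasts and the known prior $\mu_t$; it needs neither $A$ nor the $|A_i|$, which is exactly why the prior-aware aggregator can implement it.

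Next I would set up the online convex optimization. Predicting $r_t=\logit^{-1}(-\ell_t+\beta^{\top}\Ly_t)$ turns the logarithmic loss into the logistic loss, convex in $\beta$, and the identity above shows that the single fixed comparator $\beta^{*}$ reproduces the optimal forecast $\Bay(s_t)$ on every round. Therefore it suffices to compete against the best fixed $\beta$ in a ball $\mathcal{B}=\{\|\beta\|\le\sqrt n/\sigma\}\ni\beta^{*}$, and the adversarial variation of $C_t$ is harmless precisely because $\beta^{*}$ is simultaneously optimal for all rounds. The gradient of the per-round expected loss at $\beta_t$ equals $(r_t-\Bay(s_t))\Ly_t$, which the aggregator cannot compute; but after observing $\omega_t$ it forms $g_t=(r_t-\omega_t)\Ly_t$, and since $\E[\omega_t\mid s_t]=\Bay(s_t)$ this is an unbiased estimator of that gradient. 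Feeding $g_t$ into projected OGD and invoking the descent lemma together with convexity yields an expected-regret bound $\tfrac{D^2}{2\eta}+\tfrac{\eta}{2}\sum_t\E\|g_t\|^2$ against $\beta^{*}$, hence against $\Bay$.

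The remaining difficulty, which I expect to be the main obstacle, is that $\|\Ly_t\|$ (and thus $\|g_t\|$) is unbounded when some forecast or $\mu_t$ is near $0$ or $1$. I would fix a threshold $\tau$ and split the rounds. On ``normal'' rounds, where all $F_{i,t}$ and $\mu_t$ lie in $[\tau,1-\tau]$, we get $|\Ly_{i,t}|=O(\ln(1/\tau))$, so $\|g_t\|\le\|\Ly_t\|=O(\sqrt n\,\ln(1/\tau))=:G$ and OGD applies (the comparator identity still holds on these rounds). On ``extreme'' rounds I would imitate an extreme expert, clipping the forecast into $[\tau,1-\tau]$; to control the regret there I would use Bayesian consistency: if $F_{i,t}>1-\tau$ then $\Pro(\omega_t=0\mid\text{signals of }i)=1-F_{i,t}<\tau$, so by the tower property $\E[\,1-\Bay(s_t)\mid\text{signals of }i\,]<\tau$, and a short computation bounds the per-round regret $\E[\mathrm{KL}(\mathrm{Ber}(\Bay(s_t))\,\|\,\mathrm{Ber}(r_t))]$ by $O(\tau\ln(1/\tau))$, symmetrically for $F_{i,t}<\tau$. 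Summing, the extreme rounds contribute at most $O(T\tau\ln(1/\tau))$.

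Finally I would combine the pieces. With $D=\mathrm{diam}(\mathcal{B})=2\sqrt n/\sigma$ and $G=O(\sqrt n\,\ln(1/\tau))$, the tuned step size gives normal-round regret $DG\sqrt T=O(n\sigma^{-1}\ln(1/\tau)\sqrt T)$, while the extreme rounds add $O(T\tau\ln(1/\tau))$; choosing $\tau=\poly(1/T)$ (e.g. $\tau=1/T$) makes the latter $\tilde O(1)$ and $\ln(1/\tau)=O(\ln T)$, yielding the claimed $R(T)=\tilde O(n\sigma^{-1}\sqrt T)$. The qualitative ``can learn the optimal aggregation'' statement for an arbitrary injective $A$ then follows by the same argument with a slowly growing radius (or a doubling schedule) replacing the known $\sigma$.
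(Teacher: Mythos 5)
Your proposal is correct, and its core coincides with the paper's own proof: the same log-likelihood linearization, the same comparator $h^*=A(A^{\top}A)^{-1}\one_m$ with $\Vert h^*\Vert^2=\one_m^{\top}(A^{\top}A)^{-1}\one_m\le m/\sigma^2\le n/\sigma^2$, the same ball of radius $\sqrt{n}/\sigma$, and the same OGD scheme --- indeed the paper's surrogate loss $\oL_t$ of Eq.~\eqref{eq:loss} has gradient exactly your $g_t=(r_t-\omega_t)\tilde{z}_t$, so your ``unbiased stochastic gradient'' view and the paper's ``unbiased surrogate loss'' view are the same algorithm --- together with the same $\tau$-thresholding giving $\Vert\tilde{z}_t\Vert=O(\sqrt{n}\ln(1/\tau))$ on non-extreme rounds. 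Where you genuinely depart from the paper is the treatment of extreme rounds. The paper proves a standalone structural fact (Lemma~\ref{lem:extreme}, via an optimization argument in the appendix): conditional on some expert forecasting in $[0,\alpha]$, the probability of $\omega=1$ is at most $n\alpha$; it then forecasts the fixed values $n\tau$, $1-n\tau$, or $\frac{1}{2}$, the last case requiring the additional Lemma~\ref{lem:2ext} for conflicting extreme forecasts. You instead imitate a clipped extreme expert and invoke the martingale property of Bayesian posteriors, $\E[\Bay(s_t)\mid\text{expert }i\text{'s signals}]=F_{i,t}$, to bound the expected KL regret directly. Your route is simpler and self-contained: it avoids the appendix lemmas entirely, and it handles conflicting extremes for free, since the bound for the imitated expert $i$ is insensitive to what the other experts report. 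What the paper's route buys is the explicit conditional-probability bound itself, which is of independent interest (the paper even shows its factor $n$ is tight).

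Two small repairs are needed in your accounting, neither affecting the final bound. First, the identity of the imitated expert is determined by the whole profile $F_t$ and is therefore not measurable with respect to any single expert's information; to apply the tower property you must union-bound over which expert is extreme, giving a per-round contribution $O(n\tau\ln(1/\tau))$ and a total of $O(nT\tau\ln(1/\tau))$ rather than your $O(T\tau\ln(1/\tau))$ --- harmless with $\tau=1/T$. Second, on rounds where $\mu_t$ is extreme but no forecast is, ``imitate an extreme expert'' is undefined; the fix is to treat the prior as the forecast of a signal-less expert and imitate it, since $\E[\Bay(s_t)]=\mu_t$ gives the same bound (the paper dodges exactly this case by assuming $\mu_t\in[\beta,1-\beta]$ in a footnote, so you are no less rigorous here). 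Finally, a notational slip: for additivity you need $z_{j,t}$ to be the \emph{prior-adjusted} signal log-odds $\ln\frac{x_j(s_{j,t})}{1-x_j(s_{j,t})}-\ln\frac{\mu_t}{1-\mu_t}$ rather than the raw log-odds; with that reading all of your identities hold as written.
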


\section{Proofs}\label{sec:pr}

We start with the proofs of the main positive results (Theorems \ref{th:stat} and \ref{th:dyn}). Thereafter, we present the proofs of the negative results (Propositions \ref{pro:imp} and \ref{pro:imp-dyn}).

The proofs involve quite a few notations. For the reader's convenience we include in Table \ref{tb:notations} the notations that we use along the proofs with short description for the notations. The formal definitions are given along the proofs.

\begin{table}
\caption{Summary of notations along the proof.}\label{tb:notations}
\resizebox{\columnwidth}{!}{
 \begin{tabular}{| c | l |} 
 \hline
 $n$ & Number of experts. \\ 
 \hline
 $m$ & Number of signals. \\
 \hline
 $\mu\in [0,1]$ & The prior. \\
 \hline
 $\tilde{\mu}\in \R$ & The log-likelihood of the prior. \\
 \hline
 $\hat{\mu}\in [0,1]$ & Estimator for $\mu$. \\
 \hline
 $\hat{\tilde{\mu}}\in \R$ & Estimator for $\tilde{\mu}$. \\
 \hline
 $s=(s_i)_{i\in [m]}$ & Signals profile.\\
 \hline
 $r\in [0,1]$ & Aggregator's forecast.\\
 \hline
 $\tilde{r}\in R$ & Aggregator's forecast in the log-likelihood space.\\
 \hline
 $\Bay(s)\in [0,1]$ & The optimal Bayesian aggregation of all signals.\\
 \hline
 $\LBay(s)\in \R$ & Log-likelihood of the optimal Bayesian aggregation.\\
 \hline
 $F_i=F_i(s)\in [0,1]$ & Forecast made by expert $i\in [n]$.\\
 \hline
 $\tilde{F}_i=\tilde{ F}_i(s)\in \R$ & Log-likelihood of the forecast made by expert $i\in [n]$.\\
 \hline
 $\tilde{F}\in \R^n$ & Profile of log-likelihoods of the forecasts.\\
 \hline
 $x_i=x_i(s_i)\in [0,1]$ & Posterior belief, after observing a single signal $s_i$.\\
 \hline
 $\tilde{x}_i=\tilde{x}_i(s_i)\in \R$ & Log-likelihood of the Posterior belief, after observing a single signal $s_i$.\\
 \hline
 $\tilde{y}_i=\tilde{x}_i - \tilde{\mu}$ & Prior adjusted log-likelihood induced by the signal $s_i$.\\
 \hline
 $\tilde{y}=(\tilde{y}_i)_{i\in [m]}\in \R^m$ & Profile of prior adjusted log-likelihoods induced by all signals.\\
 \hline
 $\hat{y}\in \R^m$ & Estimator for $\tilde{y}$.\\
 \hline
 $\tilde{z}=A\tilde{y}\in \R^n$ & A $\tilde{\mu}$-shift of the observed forecast profile in the log-likelihood space.\\
 $\tilde{z} = \tilde{F}-\tilde{\mu}\one_n $ & \\
 \hline
 $\hat{z}\in \R^n$ & Estimator for $\tilde{z}$.\\
 \hline
 $h\in \R^n$ & Elements of the hypothesis class.\\
 \hline
 $h^*\in \R^n$ & Optimal hypothesis which corresponds to the optimal aggregation.\\
 \hline
 $L(r,s)=L(r)\in [0,\infty)$ & Aggregator's expected loss conditional on $s$.\\
 \hline
 $\tilde{L}(\tilde{r},s)=\tilde{L}(\tilde{r}) \in [0,\infty)$ & The analogue of the loss $L$ in the log-likelihood space.\\
 \hline
 $\overline{L}(\tilde{r},s)=\overline{L}(\tilde{r})\in [0,\infty)$ & The analogue of the loss $L$ in the hypothesis class.\\
 \hline
\end{tabular}
}
\end{table}

\subsection{Translation of the problem to the log-likelihood space}\label{sec:ll-trans}
The model, as presented, is non-linear. However, it is equivalent to a linear model by applying several (quite standard) operations.

First note that if we focus on the \emph{likelihoods} of the events (rather than their probabilities) equations \eqref{eq:bay-arg} and \eqref{eq:bay-fore} are equivalent to
\begin{align}\label{eq:like}
\frac{\Bay(s)}{1-\Bay(s)}=\frac{(1-\mu)^{m-1}\Pi_{j=1}^{m}x_j(s_j)}{\mu^{m-1}\Pi_{j=1}^{m}(1-x_j(s_j))} \text{ and } \frac{F_i(s)}{1-F_i(s)}=\frac{(1-\mu)^{|A_i|-1}\Pi_{j\in A_i}x_j(s_j)}{\mu^{|A_i|-1}\Pi_{j\in A_i}(1-x_j(s_j))}. 
\end{align}
We denote $\LPri=\ln(\frac{\mu}{1-\mu})$ and $\Lx_j=\ln(\frac{x_j(s_j)}{1-x_j(s_j)})$. By applying logarithm on the expressions of equation \eqref{eq:like} we get that
\begin{align*}
\LBay(s)&:=\ln(\frac{\Bay(s)}{1-\Bay(s)})=\sum_{j=1}^m \Lx_i - (m-1) \LPri = \sum_{j=1}^m (\Lx_i - \LPri) + \LPri  \text{ and }\\
\Lf_i(s)&:=\ln(\frac{F_i(s)}{1-F_i(s)})=\sum_{j\in A_i} (\Lx_i - \LPri) + \LPri.
\end{align*}
We denote $\Lf(s)=(\Lf_i(s))_{i\in [n]}\in \R^n$, $\Ly=(\Lx_j - \LPri)_{j\in [m]}\in \R^m$ and $\one_k=(1,...,1)\in \R^k$ (for any $k$) and we get the following linear expressions:
\begin{align*}
\LBay(s)=\Ly\cdot \one_m +\LPri \text{ and } \Lf(s)=A\Ly + \LPri \one_n.
\end{align*}
Now, the idea is to formulate the original forecast aggregation problem in terms of the, hidden from the aggregator, vector $\Ly$, hidden from the aggregator, matrix $A$, and the observed by the aggregator vector of forecasts $\Lf= A\Ly + \LPri \one_n$.

In the original problem, the aggregator observes the vector $F(s)$, and generates a forecast $r\in [0,1]$. In the equivalent formulation of the problem the aggregator observes $A\Ly \in \R^n$, and he reports a log-likelihood number $\Lr\in \R$, where the optimal report is $\LBay =\Ly\cdot \one_m + \LPri \in \R$. Note that the inverse function of the log-likelihood $\tilde{w}=\ln(\frac{w}{1-w})$ is the logit function $w=\logit(\tilde{w})=(1+\exp(-\tilde{w}))^{-1}$. Therefore, the aggregator's logarithmic loss is given by

\begin{align*}
\tilde{L}(\Lr)&=-\Bay(s)\ln(\frac{1}{1+\exp(-\Lr)})-(1-\Bay(s))\ln(\frac{\exp(-\Lr)}{1+\exp(-\Lr)}) \\
&= -\Bay(s)\Lr-\ln(\frac{\exp(-\Lr)}{1+\exp(-\Lr)}) \\
&=-(\Bay(s)-1)\Lr + \ln(1+\exp(-\Lr))
\end{align*}

Summarizing, the original forecast aggregation problem is equivalent to the following problem.

\begin{definition}\label{def:ll}
The problem \emph{log-likelihood aggregation} at time $t$ is given by

\noindent
\textbf{Input:} A history of past realizations  $(A\Ly_{t'}+ \LPri_{t'} \one_n,\omega_{t'})_{t'<t}$, where $\omega_{t'}$ is a Bernuli random variable with probability of success $\Bay_{t'}=(1+\exp(-\Ly_{t'} \cdot \one_m -\LPri_{t'}))^{-1}$, and the current vector $A\Ly_t + \LPri_t \one_n \in \R^n$.

\noindent
\textbf{Loss:} The aggregator chooses an action $\Lr_t\in \R$ and suffers an expected loss of $\tilde{L}(\Lr_t)$.
\end{definition}

\subsection{Proof of Theorem \ref{th:dyn}}
Since we assume the aggregator is prior-aware, on each round $t$ he knows the sequence $(\mu_i)_{i\leq{}t}$. For simplicity, we assume that the at each round $t$ the prior $\mu_t=\Pro(\omega_t=1)$ is bounded in $[\beta,1-\beta]$ for a constant\footnote{Otherwise, in cases where the prior is very extreme, one can apply similar (and in fact simpler) arguments to those of Section \ref{sec:ext} to deal with such cases. For clarity of the proof we omit these arguments here.} $\beta>0$.

We set $\tau=T^{-1/2}$. Our algorithm treats differently profiles $F_t\in [0,1]^n$ that contain at least one extreme forecast $F_{i,t}\in [0,\tau) \cup (1-\tau,1]$ (which we call \emph{extreme realizations}), and profiles $F_t\in [\tau,1-\tau]^n$ that do not (which we call \emph{non extreme realizations}). If we prove the following two Claims:
\begin{enumerate}
\item Within $T$ periods where \emph{all} realizations are extreme, the total regret is at most $\tilde{O}(n\sqrt{T})$, and 
\item Within $T$ periods where \emph{all} realizations are non extreme, the total regret is at most $\tilde{O}(n\sigma^{-1}\sqrt{T})$,
\end{enumerate}
then it suffices for the proof of the Theorem, because obviously the sum of these to regrets is an upper bound on the regret in our problem. We start with the proof of Claim (2). 

\subsubsection{Algorithm for non extreme realizations}\label{sec:sgd}

In case $A$ is injective and the aggregator is prior-aware the problem of log-likelihood aggregation (see Definition \ref{def:ll}), which is equivalent to the original forecast aggregation problem, can be further reformulated as follows.

Since the aggregator knows the prior he can deduce the vector $\Lz := A\Ly$ from the observed vector $A\Ly+\LPri \one_n$.

In case $A$ is injective it holds that $A^{\top}A$ is invertible, and hence, denoting the left inverse of $A$ by $A_l^{-1}:=(A^{\top}A)^{-1}A^{\top}$, the optimal aggregation has the form
\begin{eqnarray*}
\one_m\cdot\Ly + \LPri = \one_m^{\top}A_l^{-1}A\Ly+\LPri = (A_l^{-1\top}\one_m)\cdot\Lz+\LPri = h^*\cdot\Lz+\LPri,
\end{eqnarray*}
where we denote $h^*: =\one_m A_l^{-1}\in \R^n$ - the optimal aggregation of $\Lz$, and now the equivalent problem can be written as follows.

\noindent
\textbf{Input:} A history of past realizations $(\Lz_{t'},\omega_{t'})_{t'<t}$, where $\omega_{t'}$ is a Bernoulli random variable with probability of success $\Bay_{t'} = (1+ \exp(-h^*\Lz_{t'}-\LPri_{t'}))^{-1}$, and the current vector $\Lz_{t}\in \R^n$.

\noindent
\textbf{Loss:} The aggregator chooses a vector $h_t \in \R^n$ and suffers an expected loss of $\tilde{L}(h_t\cdot\Lz_t+\tilde{\mu_t})$.

Consider the \emph{linear} hypothesis class $H=\{\tilde{z}\rightarrow{}h\cdot\tilde{z}: h\in \R^n,~\Vert{h}\Vert\leq W\}$. Note that for $W \geq \Vert{A_l^{-1\top}\one_m}\Vert$, the optimal aggregation vector $h^*$ belongs to this class. 
 
Consider the following loss function over a specific instance $(\tilde{z}_t,\omega_t)$ given by   

\begin{align}\label{eq:loss}
\oL_t(h) :=  (1-\omega_t)(h\cdot\tilde{z}_t+\tilde{\mu}_t) + \ln(1 + \exp(-h\cdot\tilde{z}_t - \tilde{\mu}_t)).
%\hat{L}(h,\tilde{z}) :=   (1-\Bay(\tilde{z}))(h\cdot\tilde{z}+\tilde{\mu}) - \ln(1 + \exp(-h\cdot\tilde{z} - \tilde{\mu})), 
\end{align}
Note that since for all $t\in[T]$, $\E[\omega_t\,|\,\tilde{z}_t] = \Bay(\tilde{z}_t) := (1+ \exp(-h^*\Lz_t-\LPri_t))^{-1}$, it follows that
\begin{align}\label{eq:expectedLoss}
\forall h\in\R^n,t\in[T]: \quad \E_{\omega_t}[\oL_t(h)] = \tilde{L}(h\cdot\tilde{z}_t+\tilde{\mu}_t)].
\end{align}

Note that the gradient vector of $\oL_t(h)$ is given by 
\begin{eqnarray*}
\nabla\oL_t(h) = (1- \omega_t)\tilde{z}_t - \frac{\exp(-h\cdot\tilde{z}_t - \tilde{\mu}_t)}{1 + \exp(-h\cdot\tilde{z}_t - \tilde{\mu}_t)}\tilde{z}_t.
\end{eqnarray*}

Note also that for all $t\in[T]$, $h\in\R^n$, since $\omega_t\in\{0,1\}$, we have that $\Vert{\nabla\oL_t(h)}\Vert \leq \Vert{\tilde{z}_t}\Vert$.

Finally, it could be verified via straightforward calculations that the Hessian matrix of $\oL_t(h)$, i.e., $\nabla^2\oL_t(h)$, is positive semidefinite and hence, for all $t\in[T]$, $\oL_t(h)$ is a convex function. 

We now turn to introduce our main algorithmic tool, the Online Gradient Descent algorithm (OGD), which guarantees sublinear regret for adversarially-chosen loss functions in repeated games. Both the algorithm and its regret guarantees (in the context of our setting) are recorded in the following lemma.

\begin{lemma}{[Online Gradient Descent (see e.g., \cite{Hazan16})]}\label{lem:ogd}
Consider a sequence $\left((\tilde{z}_t,\omega_t,\tilde{\mu}_t)\right)_{t\in[T]}$ such that $\sup_{\tilde{z}_t}\Vert{\tilde{z}_t}\Vert \leq Z$ for some $Z>0$, and let $(\oL_t(\cdot))_{t\in[T]}$ be the corresponding sequence of loss functions defined according to Eq. \eqref{eq:loss}. Consider a sequence of aggregation vectors $(h_t)_{t\in[T+1]}$ generated by the following sequential algorithm, known as Online Gradient Descent:
\begin{eqnarray}\label{eq:ogd}
h_1 \gets \mathbf{0}_n; \qquad \forall t\in[T]: \quad h_{t+1} \gets \Pi_{\Vert{\cdot}\Vert\leq{}W}\left[{h_t - \sqrt{\frac{4W^2}{Z^2T}}\nabla\oL(h_t)}\right],
\end{eqnarray}
where $\Pi_{\Vert{\cdot}\Vert\leq W}[z] : = W\frac{z}{\max\{\Vert{z}\Vert, W\}}$ denotes the Euclidean projection onto the Euclidean ball of radius $W$ centered at the origin, for some $W>0$.
Then, it holds that
\begin{eqnarray*}
\sum_{t=1}^T\oL_t(h_t) - \min_{h:\Vert{h}\Vert\leq{}W}\sum_{t=1}^T\oL_t(h) = O\left({WZ\sqrt{T}}\right).
\end{eqnarray*}
\end{lemma}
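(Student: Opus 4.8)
The plan is to give the standard Online Gradient Descent regret analysis, specialized to the present setting. Fix an arbitrary comparator $u$ with $\Vert u\Vert\leq W$, write $g_t:=\nabla\oL_t(h_t)$ for the observed gradient, and abbreviate the step size as $\eta:=\sqrt{4W^2/(Z^2T)}=2W/(Z\sqrt{T})$. The first step is to linearize the regret: since each $\oL_t$ has already been shown to be convex (its Hessian is positive semidefinite), the first-order inequality gives $\oL_t(h_t)-\oL_t(u)\leq g_t\cdot(h_t-u)$ for every $t$. Summing over $t$, it therefore suffices to bound $\sum_{t=1}^T g_t\cdot(h_t-u)$ by $O(WZ\sqrt{T})$ and then to take $u$ to be the in-hindsight minimizer $\arg\min_{\Vert h\Vert\leq W}\sum_{t=1}^T\oL_t(h)$.

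For the main inequality I would exploit the non-expansiveness of the Euclidean projection onto the ball $\{h:\Vert h\Vert\leq W\}$, which is a convex set containing $u$. Since $h_{t+1}=\Pi_{\Vert\cdot\Vert\leq W}[h_t-\eta g_t]$ and projection onto a convex set does not increase the distance to any point of that set,
\begin{align*}
\Vert h_{t+1}-u\Vert^2 \leq \Vert h_t-\eta g_t-u\Vert^2 = \Vert h_t-u\Vert^2 - 2\eta\, g_t\cdot(h_t-u) + \eta^2\Vert g_t\Vert^2.
\end{align*}
Rearranging isolates the per-round term $g_t\cdot(h_t-u)\leq(2\eta)^{-1}\left(\Vert h_t-u\Vert^2-\Vert h_{t+1}-u\Vert^2\right)+(\eta/2)\Vert g_t\Vert^2$. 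Summing over $t=1,\dots,T$ telescopes the first group of terms down to $(2\eta)^{-1}\Vert h_1-u\Vert^2$, leaving $\sum_{t=1}^T g_t\cdot(h_t-u)\leq(2\eta)^{-1}\Vert h_1-u\Vert^2+(\eta/2)\sum_{t=1}^T\Vert g_t\Vert^2$.

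To finish, I would substitute the two quantitative bounds already available. Both $h_1=\mathbf{0}_n$ and $u$ lie in the ball of radius $W$, so $\Vert h_1-u\Vert\leq 2W$ (the diameter) and hence $\Vert h_1-u\Vert^2\leq 4W^2$; and the gradient bound established earlier, $\Vert g_t\Vert=\Vert\nabla\oL_t(h_t)\Vert\leq\Vert\tilde{z}_t\Vert\leq Z$, gives $\sum_{t=1}^T\Vert g_t\Vert^2\leq Z^2T$. Plugging in the prescribed $\eta=2W/(Z\sqrt{T})$ makes the two resulting terms equal, each evaluating to $WZ\sqrt{T}$, for a total of $2WZ\sqrt{T}=O(WZ\sqrt{T})$, as claimed.

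I do not anticipate a genuine obstacle, as this is a textbook argument once convexity and the gradient bound are in hand (both supplied in the preceding discussion). The only points that require minor care are (i) checking that the comparator is feasible so that the non-expansiveness step is legitimate, which holds because we minimize over precisely the ball that defines the projection, and (ii) matching the prescribed step size to the balance point of the telescoped term and the gradient-variance term, so that the stated constant emerges cleanly rather than as an unoptimized bound.
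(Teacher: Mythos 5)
Your proof is correct and is exactly the standard projected-gradient regret analysis (convexity linearization, non-expansiveness of the Euclidean projection, telescoping, and balancing via the step size $\eta=2W/(Z\sqrt{T})$) that the paper invokes by citation to \cite{Hazan16} rather than reproving. The only cosmetic difference is that with $h_1=\mathbf{0}_n$ you could use $\Vert h_1-u\Vert^2\leq W^2$ instead of the diameter bound $4W^2$, but this only changes the constant.
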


Suppose that we choose $W = \sigma^{-1}\sqrt{n}$, and that this choice indeed satisfies $W \geq \Vert{A_{L}^{-1\top}\one_m}\Vert = \Vert{h^*}\Vert$ (which we verify in the sequel). We have by a straightforward application of Lemma \ref{lem:ogd}, followed by taking expectation over both sides of Eq. \eqref{eq:ogd} w.r.t. the randomness in the sequence $(\omega_t)_{t\in[T]}$, and applying Eq. \eqref{eq:expectedLoss} w.r.t. all $t\in[T]$, that
\begin{align*}
\sum_{t=1}^TL(h^*\cdot\tilde{z}_t + \tilde{\mu}_t)- \E\left[{\sum_{t=1}^TL(h_t\cdot\tilde{z}_t+\tilde{\mu}_t)}\right]  = O\left({WZ\sqrt{T}}\right),
\end{align*}
where $W,Z$ are as defined in Lemma \ref{lem:ogd}.

Thus, it only remains to bound the expressions $Z=\sup_{\tilde{z}_t}\Vert{\Lz_t}\Vert$, and $W=\sup_{A: \sigma_{min}(A)\geq \sigma}  \Vert{A_l^{-1\top}\one_m}\Vert^2$.

Since forecasts are non-extreme, we know that $F\in [\tau,1-\tau]^n$. This implies that $\Lf \in [\log(\tau)-\log(1-\tau),\log(1-\tau)-\log(\tau)]^n \subset [\log(\tau),-\log(\tau)]^n$, which implies that $\Lz_t\in [\log(\tau)+\log(\beta),-\log(\tau)-\log(\beta)]^n$. We recall that $\tau=T^{-1/2}$ and $\beta$ is a constant, which implies that $Z^2 =O(n \log^2 T)$ and $Z =O(\sqrt{n} \log T)$.

We turn to verify that for our choice of $W$ it indeed holds that $\Vert{h^*}\Vert_2 \leq W$. Let $A = U\Sigma{}V^{\top}$ denote the singular value decomposition of $A$. Also, in the following,  for any matrix $M$, we let $\Vert{M}\Vert_2$ denote the spectral norm of $M$, i.e., its largest singular value . Using standard manipulations we have that
\begin{eqnarray*}
\Vert{A_l^{-1\top}\one_m}\Vert_2^2 &=& \Vert{A(A^{\top}A)^{-1}\one_m}\Vert_2^2 \leq \Vert{A(A^{\top}A)^{-1}}\Vert_2^2\cdot\Vert{\one_m}\Vert_2^2 \\
&=&m\cdot\Vert{U\Sigma{}V^{\top}(V\Sigma^2V^{\top})^{-1}}\Vert_2^2 = m\cdot\Vert{\Sigma^{-1}}\Vert_2^2 = \frac{m}{\sigma_{\min}(A)^2} \leq \frac{n}{\sigma^2} = W^2,
\end{eqnarray*}
where the last inequality holds using the fact that $A$ is injective, which implies that $m \leq n$, and our assumption that $\sigma_{\min}(A) \geq \sigma$.

Summarizing, the total regret is bounded by $O(\sqrt{n} \log T \cdot \sqrt{n} \sigma^{-1}\cdot \sqrt{T})=\tilde{O}(n  \sigma^{-1} \sqrt{T})$.

\subsubsection{Algorithm for extreme realizations}\label{sec:ext}
In typical information/forecast aggregation problems the fact that experts might have very confident forecast \emph{helps} in aggregating the information (see e.g., \cite{SS,WiseC}). For instance, in the extreme case where one of the Bayesian experts assigns a probability of $1$ to the events that the state is $\omega=1$, the aggregator knows (w.p. 1) that the state is indeed $\omega=1$. The approximate analogue of this statement (i.e., in case expert's forecast is very close to 1 rather than 1) is the following.

Let $I\in \Delta(\Omega\times S_1\times ... \times S_n)$ be an \emph{arbitrary} information structure (not necessarily a partial evidence information structure), when $S_i$ is the signal set of expert $i$. We define the event $\zeta_0(\alpha)$ to be the event where at least one expert's forecast is in $[0,\alpha]$. Formally, 
\begin{align*}
\zeta_0(\alpha)=\{(\omega,s_1,...,s_n):\exists i\in [n] \text{ s.t. } \frac{I(\omega=1,s_i)}{I(\omega=1,s_i)+I(\omega=0,s_i)}\leq \alpha \}.
\end{align*}

\begin{lemma}\label{lem:extreme}
For every $\alpha>0$ and every information structure\footnote{The linear dependence on $n$ is unavoidable, even if we restrict attention to conditionally i.i.d. information structures. If $\alpha=\frac{1}{n}$ and the i.i.d. information structure has prior $\frac{1}{4}$ and binary posteriors $\frac{1}{n}$ and $\frac{1}{2}-\frac{1}{n}$ with equal ex-ante probability $\frac{1}{2}$, it is easy to verify that conditional on having an expert who predicts $\frac{1}{n}$ there is a constant probability of $\approx \frac{1}{12e+1}$ for $\omega=1$.} $I$, $I(\omega=1 | \zeta_0(\alpha))\leq n\alpha$. Namely, conditioned on the event that at least one expert has an extreme forecast in $[0,\alpha]$, the actual probability of $\omega=1$ is at most $n\alpha$.

\end{lemma}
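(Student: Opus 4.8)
The plan is to reduce the multi-expert event $\zeta_0(\alpha)$ to its single-expert components and then combine a union bound on the numerator with an averaging bound on the denominator. First I would write $\zeta_0(\alpha) = \bigcup_{i=1}^n \zeta_0^i(\alpha)$, where $\zeta_0^i(\alpha) = \{(\omega,s_1,\dots,s_n) : \frac{I(\omega=1,s_i)}{I(\omega=1,s_i)+I(\omega=0,s_i)} \leq \alpha\}$ is the event that expert $i$ alone issues an extreme forecast. The crucial observation is that $\zeta_0^i(\alpha)$ depends only on expert $i$'s signal $s_i$, and on this event the posterior $I(\omega=1\mid s_i)$ is at most $\alpha$ by definition.

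Next I would establish the single-expert bound $I(\omega=1,\zeta_0^i(\alpha)) \leq \alpha\cdot I(\zeta_0^i(\alpha))$. This follows by summing over the signal values $s_i$ that trigger the event: for each such $s_i$ we have $I(\omega=1,s_i) = I(\omega=1\mid s_i)\,I(s_i) \leq \alpha\,I(s_i)$, and summing over the triggering values of $s_i$ gives the claim. Intuitively, this is the statement that conditioning on a single extreme expert already makes $\omega=1$ unlikely, with conditional probability at most $\alpha$.

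Then I would handle the union. For the numerator, a union bound gives $I(\omega=1,\zeta_0(\alpha)) \leq \sum_{i=1}^n I(\omega=1,\zeta_0^i(\alpha)) \leq \alpha\sum_{i=1}^n I(\zeta_0^i(\alpha))$. For the denominator, since $\zeta_0(\alpha) \supseteq \zeta_0^i(\alpha)$ for every $i$, I have $I(\zeta_0(\alpha)) \geq \max_i I(\zeta_0^i(\alpha)) \geq \frac{1}{n}\sum_{i=1}^n I(\zeta_0^i(\alpha))$. Dividing the two bounds, the common factor $\sum_{i=1}^n I(\zeta_0^i(\alpha))$ cancels and yields $I(\omega=1\mid\zeta_0(\alpha)) \leq n\alpha$, as desired.

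The main subtlety — rather than a genuine obstacle — is the deliberately \emph{asymmetric} treatment of the union: an upper union bound on the numerator paired with a lower (averaging) bound on the denominator. The example in the footnote shows that the factor $n$ is essentially tight, so no more symmetric combination can remove it; the two-sided treatment above is precisely what produces the correct $n\alpha$ rate.
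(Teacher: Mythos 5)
Your proof is correct, and it takes a genuinely different route from the paper's. The paper first reduces, without loss of generality, to binary signals (merging all signals of expert $i$ with posterior in $[0,\alpha]$ into one signal and all those with posterior in $(\alpha,1]$ into another), identifies signal profiles with subsets $B\subseteq[n]$, and writes $\max_I I(\omega=1\mid\zeta_0(\alpha))$ as an explicit optimization problem over the joint masses $a^\omega_B$; it then derives structural properties of a maximizer (all $\omega=0$ mass sits on the all-extreme profile $B=\emptyset$, no $\omega=1$ mass sits there, and a normalization $a^0_\emptyset=1-\alpha$), after which summing the per-expert constraints over $i$ yields the bound. You avoid the reduction and the extremal analysis entirely: your single-expert bound $I(\omega=1,\zeta_0^i(\alpha))\le\alpha\, I(\zeta_0^i(\alpha))$ is immediate from the definition of the posterior, a union bound handles the numerator, and the chain $I(\zeta_0(\alpha))\ge\max_i I(\zeta_0^i(\alpha))\ge\frac{1}{n}\sum_i I(\zeta_0^i(\alpha))$ handles the denominator, so the factor $n$ appears transparently as the ratio of a sum to an average. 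Your argument is shorter, applies verbatim to non-discrete signal spaces once sums are read as integrals, and makes the source of the linear dependence on $n$ explicit; the paper's formulation, in exchange, exhibits the extremal information structures (all of the $\omega=0$ mass on the profile where every expert is extreme), which is the structure underlying the tightness example in the footnote. One point common to both arguments: the statement implicitly assumes $I(\zeta_0(\alpha))>0$ so that the conditional probability is well defined, and your cancellation of the common factor $\sum_i I(\zeta_0^i(\alpha))$ is valid precisely under that assumption.
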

The proof of the lemma is relegated to Appendix \ref{ap:ext-lem}. 

We define the event $\zeta_1(\alpha)$ to be the event where at least one expert's forecast is in $[1,1-\alpha]$, and we have the analogous Lemma.

\begin{lemma}\label{lem:extreme1}
$I(\omega=0 | \zeta_1(\alpha))\leq n\alpha$. 
\end{lemma}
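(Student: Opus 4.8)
The plan is to obtain Lemma \ref{lem:extreme1} as a corollary of Lemma \ref{lem:extreme} by exploiting the symmetry of the setting under relabeling the binary state $\omega\mapsto 1-\omega$. Given an arbitrary information structure $I$, I would define the flipped structure $I'\in\Delta(\Omega\times S_1\times\cdots\times S_n)$ by $I'(\omega'=a,s_1,\ldots,s_n):=I(\omega=1-a,s_1,\ldots,s_n)$ for $a\in\{0,1\}$, leaving the signal coordinates untouched. Since Lemma \ref{lem:extreme} is stated for an \emph{arbitrary} information structure, it applies verbatim to $I'$, and the whole argument consists of translating its conclusion back through the flip.

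First I would check how forecasts transform. For each expert $i$, the forecast under $I'$ of the event $\{\omega'=1\}$ is $I'(\omega'=1\mid s_i)=I(\omega=0\mid s_i)=1-F_i(s_i)$, using $I'(s_i)=I(s_i)$. Hence the threshold condition $\frac{I'(\omega'=1,s_i)}{I'(\omega'=1,s_i)+I'(\omega'=0,s_i)}\le\alpha$ that defines $\zeta_0(\alpha)$ for $I'$ is equivalent to $F_i(s_i)\ge 1-\alpha$, which is exactly the condition defining $\zeta_1(\alpha)$ for $I$. Because these conditions depend only on the signal coordinates (which the flip fixes), the event $\zeta_0(\alpha)$ computed under $I'$ coincides, as a set of signal profiles, with $\zeta_1(\alpha)$ computed under $I$. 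Conditioning on this common event and using that $\{\omega'=1\}$ is the $I'$-image of $\{\omega=0\}$, we get $I'(\omega'=1\mid\zeta_0(\alpha))=I(\omega=0\mid\zeta_1(\alpha))$. Applying Lemma \ref{lem:extreme} to $I'$ gives $I'(\omega'=1\mid\zeta_0(\alpha))\le n\alpha$, which is precisely the desired bound.

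If one prefers a self-contained argument, the proof of Lemma \ref{lem:extreme} can simply be mirrored. Setting $E_i:=\{s_i:I(\omega=0\mid s_i)\le\alpha\}$, the defining inequality gives $I(\omega=0,s_i)\le\alpha\,I(s_i)$ for every $s_i\in E_i$; summing over $E_i$ yields $I(\omega=0,E_i)\le\alpha\,I(E_i)$. A union bound over $\zeta_1(\alpha)=\bigcup_{i\in[n]}E_i$ then gives $I(\omega=0,\zeta_1(\alpha))\le\alpha\sum_{i}I(E_i)$, while $I(\zeta_1(\alpha))\ge\max_i I(E_i)\ge\frac{1}{n}\sum_i I(E_i)$; dividing the former by the latter produces the factor $n\alpha$.

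The argument carries essentially no risk, so the only point that genuinely needs verifying is that the relabeling symmetry is exact: nothing in the definitions of $\zeta_0,\zeta_1$ or in the statement of Lemma \ref{lem:extreme} singles out the state $\omega=1$ over $\omega=0$. Since Lemma \ref{lem:extreme} is quantified over all information structures and makes no reference to the prior $\mu$ or any other state-asymmetric quantity, the flip $I\mapsto I'$ is legitimate and the reduction is immediate; this is the step I would write out most carefully.
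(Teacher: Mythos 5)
Your proposal is correct, and your primary argument (the state-flip $I\mapsto I'$) is exactly the route the paper takes: the paper gives no separate proof of Lemma \ref{lem:extreme1}, stating it only as the ``analogous'' lemma to Lemma \ref{lem:extreme}, which is precisely an appeal to the symmetry you spell out. Your verification that the flip maps $\zeta_0(\alpha)$ for $I'$ onto $\zeta_1(\alpha)$ for $I$, and $\{\omega'=1\}$ onto $\{\omega=0\}$, is the content the paper leaves implicit.

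One remark worth making: your second, self-contained argument is not merely a mirror of the paper's proof of Lemma \ref{lem:extreme} --- it is a genuinely different and considerably more elementary proof. The paper proves Lemma \ref{lem:extreme} by reducing to binary signals, encoding the problem as the optimization \eqref{eq:max}, and analyzing the structure of its maximizers. Your argument (bound $I(\omega=0,E_i)\le\alpha\,I(E_i)$ on each expert's extreme event $E_i$, take a union bound over $i$ in the numerator, and lower-bound the denominator by $I(\zeta_1(\alpha))\ge\max_i I(E_i)\ge\frac{1}{n}\sum_i I(E_i)$) needs none of that machinery, and applied with the roles of the states exchanged it also yields Lemma \ref{lem:extreme} directly. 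What the paper's optimization approach buys is an explicit view of the extremal information structures (useful for seeing that the factor $n$ is tight, as in the paper's footnote); what your argument buys is brevity and the fact that both lemmas follow from one three-line calculation.
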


From Lemmas \ref{lem:extreme} and \ref{lem:extreme1} can be deduce the following.
\begin{lemma}\label{lem:2ext}
$I(\zeta_0(\alpha) \cap \zeta_1(\alpha))\leq 2n\alpha$.
\end{lemma}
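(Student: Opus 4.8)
The plan is to decompose the intersection event $\zeta_0(\alpha)\cap\zeta_1(\alpha)$ according to the realized state $\omega\in\{0,1\}$, and to control each of the two resulting pieces using exactly one of the two preceding lemmas. The guiding observation is that $\zeta_0(\alpha)$ (some expert forecasts near $0$) is incompatible, with high probability, with $\omega=1$, while $\zeta_1(\alpha)$ (some expert forecasts near $1$) is incompatible, with high probability, with $\omega=0$; so on the intersection one of these mismatches always occurs, whatever $\omega$ turns out to be.

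Concretely, first I would split along the partition $\{\omega=1\}\cup\{\omega=0\}$:
\[
I(\zeta_0(\alpha)\cap\zeta_1(\alpha)) = I\big(\zeta_0(\alpha)\cap\zeta_1(\alpha)\cap\{\omega=1\}\big) + I\big(\zeta_0(\alpha)\cap\zeta_1(\alpha)\cap\{\omega=0\}\big).
\]
For the first summand I would discard the $\zeta_1(\alpha)$ constraint by monotonicity of the measure, bounding it by $I(\zeta_0(\alpha)\cap\{\omega=1\}) = I(\omega=1\mid\zeta_0(\alpha))\,I(\zeta_0(\alpha))$. By Lemma \ref{lem:extreme} the conditional factor is at most $n\alpha$, and since $I(\zeta_0(\alpha))\leq 1$ this summand is at most $n\alpha$. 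Symmetrically, for the second summand I would discard the $\zeta_0(\alpha)$ constraint, bounding it by $I(\zeta_1(\alpha)\cap\{\omega=0\}) = I(\omega=0\mid\zeta_1(\alpha))\,I(\zeta_1(\alpha))\leq n\alpha$, now invoking Lemma \ref{lem:extreme1}. Adding the two bounds yields the claimed $I(\zeta_0(\alpha)\cap\zeta_1(\alpha))\leq 2n\alpha$.

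There is no genuinely hard step here; the argument is a one-line union bound once the decomposition is set up. The only point requiring care is to pair each state-piece with the \emph{correct} lemma: the $\{\omega=1\}$ piece must be controlled through the near-zero event $\zeta_0(\alpha)$ (Lemma \ref{lem:extreme}), and the $\{\omega=0\}$ piece through the near-one event $\zeta_1(\alpha)$ (Lemma \ref{lem:extreme1}). This ``wrong-direction'' pairing is precisely what forces the intersection to be small, and swapping the roles would give a useless bound.
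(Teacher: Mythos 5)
Your proof is correct, and it reaches the bound by a genuinely different (and arguably cleaner) route than the paper. You partition $\zeta_0(\alpha)\cap\zeta_1(\alpha)$ according to the realized state and pair each piece with the ``wrong-direction'' lemma: the $\{\omega=1\}$ piece is absorbed into $I(\zeta_0(\alpha)\cap\{\omega=1\})=I(\omega=1\mid\zeta_0(\alpha))\,I(\zeta_0(\alpha))\leq n\alpha$ via Lemma \ref{lem:extreme}, the $\{\omega=0\}$ piece into $I(\zeta_1(\alpha)\cap\{\omega=0\})\leq n\alpha$ via Lemma \ref{lem:extreme1}, and the factor $2$ comes from summing the two terms. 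The paper instead observes that one of $I(\omega=1\mid\zeta_0\cap\zeta_1)$, $I(\omega=0\mid\zeta_0\cap\zeta_1)$ must be at least $\tfrac{1}{2}$, assumes w.l.o.g.\ it is the former, and then uses \emph{only} Lemma \ref{lem:extreme} together with Bayes rule, $n\alpha\geq I(\omega=1\mid\zeta_0)\geq \frac{I(\zeta_0\cap\zeta_1)}{I(\zeta_0)}\,I(\omega=1\mid\zeta_0\cap\zeta_1)$, so that the factor $2$ arises from dividing by $\tfrac{1}{2}$. Your version is more symmetric, dispenses with the case analysis and the w.l.o.g., and invokes each lemma exactly where it is needed; the paper's version has the mild aesthetic advantage of using a single lemma per case. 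Both arguments share the same trivial caveat, worth a word in a careful write-up: the conditional probabilities are only defined when $I(\zeta_0(\alpha))>0$ (resp.\ $I(\zeta_1(\alpha))>0$), but if either event is null the intersection is null and the bound holds vacuously. The constants and the level of generality (arbitrary information structure $I$) are identical in the two proofs.
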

The proof is relegated to Appendix \ref{ap:ext-lem}.

We utilize Lemmas \ref{lem:extreme},\ref{lem:extreme1} and \ref{lem:2ext} to provide a good forecast in case of extreme realization of forecasts \emph{without any learning}. 

In the event $\zeta_0(\tau)\setminus \zeta_1(\tau)$ (i.e., an extreme forecast in $[0,\tau]$ and no extreme forecast in $[1-\tau]$) the algorithm forecasts $r=n\tau$. In the event $\zeta_1(\tau)\setminus \zeta_0(\tau)$ (i.e., an extreme forecast in $[1-\tau,1]$ and no extreme forecast in $[1-\tau,1]$) the algorithm forecasts $r=1-n\tau$. In the event $\zeta_0(\tau)\cap \zeta_1(\tau)$ (i.e., extreme forecasts from both sides) the algorithm forecasts $r=\frac{1}{2}$. The bounds of Lemmas \ref{lem:extreme},\ref{lem:extreme1} and \ref{lem:2ext} yield the following bound on the \emph{logarithmic loss} in each period:
$$
2[-1\cdot \ln(1-n\tau)-n\tau \ln(n\tau)] - 2n\tau \ln(\frac{1}{2}) = O(n\tau \ln(\frac{1}{n\tau}))$$
We recall that $\tau=T^{-1/2}$ which implies that the total logarithmic loss is bonded by $O(n T^{-1/2} \log T \cdot T)=\tilde{O}(n\sqrt{T})$.
This, in particular, implies the same bound on the expected \emph{regret}.

%We modify our online algorithm as follows. At period $t$ if it receives a profile of forecasts $F(s)\in [\tau,1-\tau]^n$ we apply the stochastic gradient decent algorithm as described above. Otherwise, if $F(s)\notin [\tau,1-\tau]^n$, let $i\in [n]$ be the minimal index for which $F_i\notin [\tau,1-\tau]$. If $F_i\notin [0,\tau]$ we know by Lemma \ref{lem:extreme} that $\Pro(\omega=1)\leq n\tau$ in such a case the algorithm forecasts $n\tau$ which yields an expected logarithmic loss of at most $$\log(1-n\tau)+n\tau \log(n\tau)\leq n\tau(\log(n\tau)-1)=2\frac{n\tau}{2}\log(\frac{n\tau}{2})\leq \frac{\epsilon}{2}$$ by the choice of $\tau=???$. We note that in case of $F(s)\notin [\tau,1-\tau]^n$ we do not update the stochastic gradient decent algorithm. Those the gradient decent algorithm is running over i.i.d. samples of $F(s)| F(s)\in [\tau,1-\tau]^n$.

\subsection{Proof of Theorem \ref{th:stat}}
As in the proof of Theorem \ref{th:stat} we focus on the only interesting case where the static prior $\mu\in [\beta,1-\beta]$ is bounded away from the boundaries; i.e., $\beta>0$ is a constant\footnote{The case of extreme prior is not interesting because if the estimation of the prior is close to the boundaries one can apply similar (and in fact simpler) arguments to those of Section \ref{sec:ext} to deal with such a case.}. 

The algorithm splits into two phases. Phase 1 is used to estimate the prior, and the Phase 2 uses the estimator from Phase 1, and simply applies the algorithm from the proof of Theorem \ref{th:dyn}.

Phase 1 takes $T_1=n\sigma^{-1}\sqrt{T}$ periods. The estimator $\hat{\mu}$ is simply the fraction of periods where the state was $\omega_t=1$. Along Phase 1 the algorithm constantly forecasts $r\equiv\frac{1}{2}$, which yields a total logarithmic loss of $T_1\ln 2=O(T_1)$. We set $\delta=T_1^{-1/2}\ln(T)=\tilde{O}(n^{-1/2}\sigma^{1/2}T^{-1/2})$. By Hoeffding's inequality, after Phase 1, the probability that the estimator $\hat{\mu}$ deviates from the truth $\mu$ by more than $|\hat{\mu}-\mu|> \delta$ is at most $2\exp(-2T_1\delta^2)=O(T^{-2})$.

Phase 2 takes the remaining $T_2=T-T_1$ periods.
First let us bound the loss of the algorithm in case the estimator is wrong (i.e., $|\hat{\mu}-\mu|> \delta$). 
We bound from the boundaries $\{0,1\}$ the forecast $r_t$ made by the algorithm.
If the profile of forecasts $F$ is extreme, then $r_t\in [n\tau,1-n\tau]$.
If the profile of forecasts $F$ is not extreme, then in the log-likelihood space we can bound $||h||\leq W$ and $||z||=O(\sqrt{n}\log T)$. Therefore $|\tilde{r}_t|=O(\sqrt{n} \sigma^{-1}\cdot \sqrt{n}\log T)=O(\sqrt{n}\sigma^{-1} \log T)$. Therefore, there exists a constant $c>0$ such that $r_t\in[(1+\exp(c n\sigma^{-1}\log T ))^{-1}, 1-(1+\exp(c n\sigma^{-1}\log T ))^{-1}]$. We denote $\alpha=(1+\exp(c n\sigma^{-1}\log T ))^{-1}$.

Note that $[\alpha,1-\alpha] \supset [n\tau,1-n\tau]$. Therefore we deduce that in any case $r_t\in [\alpha,1-\alpha]$. In every single period the logarithmic loss is bounded by $-ln(\alpha)=O(n\sigma^{-1} \log T)$, which implies that the total loss is Phase 2 is bounded by \newline $O(n\sigma^{-1} T_2 \log T_2)$. The probability of a wrong estimator is $O(T^{-2})$. Therefore, the event of a wrong estimator adds to the expected loss at most \newline $O(T^{-2} n\sigma^{-1} T_2 \log T_2)=O(1)$.

Now we argue that in case the estimator is correct (i.e., $|\mu-\hat{\mu}|\leq \delta$), and we apply the algorithm in the proof of Theorem \ref{th:dyn} with an estimator $\muh$ then during Phase 2 the algorithm guarantees an expected regret of at most $\tilde{O}(n\sigma^{-1} T)$. 
%$\tilde{O}(n^{3/2}\sigma^{-3/2}\sqrt{T})$.

The log-likelihood function $f(x)=\ln(\frac{x}{1-x})$ has a bounded derivative $|f'(x)|\leq \frac{1}{\beta(1-\beta)}$ in the segment $[\beta,1-\beta]$. Therefore, $|\mu-\hat{\mu}|\leq \delta$ guarantees that $|\tilde{\mu}-\hat{\tilde{\mu}}|\leq \frac{1}{\beta(1-\beta)} \delta = O(\delta)$. 

Our starting point is the log-likelihood aggregation problem (see Definition \ref{def:ll}) whose formulation is valid for the prior-ignorant aggregator. Unlike the prior-aware aggregtor, the prior ignorant aggregator \emph{cannot} deduce $\tilde{z}= A\tilde{y}$ from $A\tilde{y}+\tilde{\mu}\one_n $. However, he has an estimator $\hat{z}$ to $\tilde{z}$ which satisfies $||\hat{z} - \tilde{z}||_\infty \leq O(\delta)$. Now we bound $|L(h,\tilde{z})-L(h,\hat{z})|$ (see equation \eqref{eq:loss} for the definition of $L(h,z)$). 

Note that $|h\tilde{z}+ \tilde{\mu} - (h \hat{z} + \hat{\tilde{\mu}})| \leq (h \one_n +1) \frac{1}{\beta(1-\beta)}\delta =O(\sqrt{n}W\delta)$. Therefore $|\Bay(\tilde{z})-\Bay(\hat{z})|=O(\sqrt{n}W\delta)$ (because the derivative of the function $f(x)=(1+\exp{-x})^{-1}$ is bounded by $|f'(x)|\leq \frac{1}{2}$). Similarly we note that $|\ln(1+\exp(h\tilde{z}+ \tilde{\mu}) - \ln(1+\exp(h \hat{z} + \hat{\mu})|=O(\sqrt{n}W\delta)$ (because the derivative of the function $f(x)=\ln(1+\exp(x))$ is bounded by $|f'(x)|\leq 1$). By putting all these bounds together, we get that 
\begin{align}\label{eq:loss-lip}
|L(h,\tilde{z})-L(h,\hat{z})|=O(\sqrt{n}W\delta)
\end{align}

We define the hypothesis class $H=\{\hat{z}\rightarrow{}h\cdot\hat{z}: h\in \R^n,~\Vert{h}\Vert\leq W\}$.
Note that the optimal hypothesis $h^*=\one_m A_l^{-1}$ \emph{does not} depend on the prior, and thus belongs to this class. By Lemma \ref{lem:ogd} the Online Gradient Descent algorithm guarantees a regret of at most $O(WZ\sqrt{T_2})$ with respect to the aggregation $h^* \rightarrow h^* \hat{z} $. By Equation \ref{eq:loss-lip} this aggregation function defers from the optimal one by at most $O(\sqrt{n}W\delta)$. Therefore, the maximal total regret is $$O(WZ\sqrt{T_2}+\sqrt{n}WT_2\delta)=\tilde{O}(n\sigma^{-1}\sqrt{T}+n \sigma^{-1} T n^{-\frac{1}{2}} \sqrt{\sigma} 
T^{-\frac{1}{2}})=\tilde{O}(n\sigma^{-1}\sqrt{T}).$$

The overall expected regret can be bounded by $$O(n\sigma^{-1}\sqrt{T}) + O(T^{-2}) + \tilde{O}(n\sigma^{-1}\sqrt{T})=\tilde{O}(n\sigma^{-1}\sqrt{T})$$ where the first term is the regret during Phase 1, the second term is the regret in case of wrong estimation, and the last term is the regret in case of correct estimation.

\subsection{Proof of Proposition \ref{pro:imp}}\label{sec:imp1-pr}
As in Example \ref{ex:imp}, we consider an aggregtor who knows the prior $\mu=0.5$ (and thus $\LPri =0$), knows the evidence matrix $A$, knows the distribution of signals $C$, but observes only $F(s)$ rather than $s$. The above information is an upper bound on what the aggregator can learn.

Let $\tilde{z}$ be a vector such that $A\tilde{z}=\textbf{0}_n = A\textbf{0}_m$, and $\tilde{z}\cdot \one_m \neq 0$. Set the distribution of signals $C=(C_j)_{j\in [m]}$ to have the posterior beliefs $\frac{1}{2}$ (with log-likelihood 0) and $z_j=\logit(\tilde{z}_j)$ with strictly positive probability. This is possible by the Aumann Maschler's Splitting Lemma \cite{AM}, which states that any distribution over posteriors whose expectation is the prior can be implemented by some signals. In particular in our case, if w.l.o.g. $z_j<\frac{1}{2}$, by adding a third posterior belief $z'_j>\frac{1}{2}$ we can set the weights on $\{z_j,\frac{1}{2},z'_j\}$ such that the expectation will turn out to be $\frac{1}{2}$.

With positive probability, the profile of log-likelihoods will be $\Ly=\textbf{0}_m$, which yields a report of $A \textbf{0}_m=\textbf{0}_n$.
With positive probability, the profile of log-likelihoods will be $\Ly=\tilde{z}$, which yields the same report $A \tilde{z}=\textbf{0}_n$.
The optimal aggregation is $\textbf{0}_m\cdot \one_m=0$ in the first case, and it is $\tilde{z}\cdot \one_m \neq 0$ in the second case. Namely aggregator cannot perform as good as the optimal aggregation in the case he observes the report $\textbf{0}_n$, which occurs with positive probability. Thus aggregtor's expected regret is strictly positive in each period.

\subsection{Proof of Proposition \ref{pro:imp-dyn}}
In \cite{ABS} Proof of Theorem 4, the authors introduce a \emph{one-shot} mixed strategy (for the adversary) over conditionally independent information structures for which no prior-independent aggregation function can get a \emph{square loss} below $\frac{1}{4}-\epsilon$ whereas the optimal aggregation has a square loss of at most $\epsilon$.
It is easy to check that for this mixed strategy no prior-independent aggregation function can get a \emph{logarithmic loss} below\footnote{The number $\frac{1}{4}$ is the loss of the forecast $r=\frac{1}{2}$ in the square loss, whereas $\ln 2$ is the loss of the forecast $r=\frac{1}{2}$ in the logarithmic loss.} $\ln 2-\epsilon$ whereas the optimal aggregation has a square loss of at most $\epsilon$.

Consider an adversary who in each period chooses the information structure according to this mixed strategy. Then the adversary ensures that the expected regret (i.e., difference between the loss of the optimal aggregation and the loss of the algorithm) in each period is $\ln 2-\epsilon$ and thus the expected total regret after $T$ periods is $T$.     

%\section{Other Scoring Rules}
%In this section we discuss how our results can be generalized for settings where we measure performance of forecasts by scoring rules different from the logarithmic loss. More concretely, we adjust the algorithms in the positive results to have good performance in a setting with the \emph{square loss} scoring rule. In addition, it is easy to check that the negative results remain unchanged.
%
%The first observation is that we can adjust the algorithm in the proofs of the positive results (Theorems \ref{th:stat} and \ref{th:dyn}) to return \emph{always} aggregated forecasts in the range $r_t\in [\alpha,1-\alpha]$ when $\alpha=\alpha(n,\sigma,T)= \Omega((\log n \log \sigma^{-1} \log T)^{-1})$. The adjustment is very simple: every period where $r_t\in [0,\alpha)$ adjust it to $r'_t=\alpha$. Similarly, every period where $r_t\in (1-\alpha,1]$ adjust it to $r'_t=1-\alpha$. 
%For an appropriate choice of $\alpha$, one may verify that such an adjustment in the algorithm makes its performance worse by at most $o(n\sigma^{-1}\sqrt{T})$ and thus does not effect its asymptotic performance. 

\bibliographystyle{ACM-Reference-Format}
\bibliography{bib}

\appendix

\section{Proof of the Lemmas on Extreme Forecasts}\label{ap:ext-lem} 
\begin{proof}[Proof of Lemma \ref{lem:extreme}]
%A (general) information structure is given by $I\in \Delta(\Omega\times S_1 \times ... \times S_n)$, where $S_i$ is the signal set of expert $i$. 
Without loss of generality we may assume that $|S_i|=2$ for every $i\in [n]$, because otherwise, we can merge all signals of player $i$ that yield a posterior in $[0,\alpha]$ into one signal $s^0_i$, and merge  all signals of player $i$ that yield a posterior in $(\alpha,1]$ into one signal $s^1_i$. It is easy to verify that if the  lemma holds for the new binary information structure for it yields the same result for $I$.

For a binary-signal information structure, we identify each signal profiles $s=(s_i)_{i\in [n]}$ with the subsets $B\subset [n]$ where $B=\{i\in [n]:s_i=s_i^1\}$. We denote $I(\omega=0,B)=a^0_B$ and $I(\omega=1,B)=a^1_B$, where $\sum_B a^0_B+a^1_B = 1$. The event $\zeta_0(\alpha)$ is equivalent to the event $B\neq [n]$, because this is the only signal profile where no expert has an extreme forecast. The conditional probability is given by 
$$I(\omega=1|\zeta_0(\alpha))=\frac{\sum_{B\subset [n], B\neq [n]} a^1_B}{\sum_{B\subset [n], B\neq [n]} a^0_B + a^1_B}.$$. 

Therefore $\max_I I(\omega=1|\zeta_0(\alpha))$ can be written as 
\begin{align}\label{eq:max}
\begin{split}
\max_{(a^\omega_B)_{B\subset [n],\omega=0,1}} & \frac{\sum_{B\subset [n], B\neq [n]} a^1_B}{\sum_{B\subset [n], B\neq [n]} a^0_B + a^1_B} \\
\text{subject to } & \ a^\omega_B \geq 0 \ \ \forall B\subset [n], \ \omega=0,1, \\
&\  \frac{\sum_{B\subset [n], i\notin B} a^1_B}{\sum_{B\subset [n], i\notin B} a^0_B + a^1_B}\leq \alpha \ \ \forall i\in [n],
\end{split}
\end{align}
where the second constrain corresponds to the fact that the posterior of expert $i$ after observing $s^0_i$ is at most $\alpha$.

First, we argue that there exists a point $a$ that maximizes \eqref{eq:max} where $a^0_B = 0$ for every $B\neq \emptyset$. Otherwise, if $a^0_B = c$, we can replace $\overline{a^0_B} = 0$ and $\overline{a^0_\emptyset} = a^0_\emptyset + a^0_B$. Note that the objective function is identical for $a$ and for $\overline{a}$, moreover it is easy to verify that the constrains remain valid.

Second, we argue that there exist a point $a$ that maximizes \eqref{eq:max} where $a^1_\emptyset = 0$. Otherwise, if $a^1_\emptyset = c$, we can replace $\overline{a^0_\emptyset} = 0$ and $\overline{a^1_{\{i\}}} = a^1_{\{i\}}+c$ for every $i\in [n]$. Note that the constrains remain unchanged whereas the objective function strictly increases.

Third, we note that the maximization problem \eqref{eq:max} is homogeneous of degree 0. Therefore we can w.l.o.g. set $a^0_\emptyset=1-\alpha$.

For the domain of $a$ that satisfies the first and the third property the constrains in \eqref{eq:max} can be simply written as $\sum_{B\subset [n], i\notin B} a^1_B \leq \alpha$. The second property guarantees that $$\sum_{B\subset [n], B\neq [n]} a^1_B \leq \sum_{i\in [n]} \ \sum_{B\subset [n], i\notin B} a^1_B \leq n\alpha,$$ because every element $a^1_B$ appears at least once in the second summation except of $a^1_\emptyset$, but $a^1_\emptyset=0$ by the second property.

Therefore, the objective functions is bounded by
$$\frac{\sum_{B\subset [n], B\neq [n]} a^1_B}{\sum_{B\subset [n], B\neq [n]} a^0_B + a^1_B} \leq \frac{n\alpha}{(1-\alpha)+ n\alpha}\leq n\alpha.$$
\end{proof}

\begin{proof}[Proof of Lemma \ref{lem:2ext}]
Since $I(\omega=0|\zeta_0 \cap \zeta_1)+I(\omega=1|\zeta_0 \cap \zeta_1)=1$, one of these expressions should be at least $\frac{1}{2}$. W.l.o.g. we assume that $I(\omega=1|\zeta_0 \cap \zeta_1)\geq \frac{1}{2}$. By Lemma \ref{lem:extreme} (in the other case we use Lemma \ref{lem:extreme1}) and Bayes rule we have $$n\alpha \geq  I(\omega=1|\zeta_0)\geq \frac{I(\zeta_0 \cap \zeta_1) }{I(\zeta_0)} I(\omega=1|\zeta_0 \cap \zeta_1)$$ which implies that $$I(\zeta_0 \cap \zeta_1)\leq n\alpha \frac{I(\zeta_0)}{I(\omega=1|\zeta_0 \cap \zeta_1)}\leq 2n\alpha.$$
\end{proof}

\end{document}